\newtheorem{theorem}{Theorem}[section]
\newtheorem{lemma}[theorem]{Lemma}
\theoremstyle{definition}
\newtheorem{definition}[theorem]{Definition}
\newtheorem{example}[theorem]{Example}
\newtheorem{xca}[theorem]{Exercise}
\newtheorem{corollary}[theorem]{Corollary}
\newtheorem{proposition}[theorem]{Proposition}
\newtheorem{assumption}[theorem]{Hypothesis}
\newtheorem{hypothesis}[theorem]{Hypothesis}
\theoremstyle{remark}
\newtheorem{remark}[theorem]{Remark}
\numberwithin{equation}{section}
\makeatletter \providecommand{\@LN}[2]{} \makeatother
\theoremstyle{definition}
\newtheorem{lemma}[theorem]{Lemma}
\theoremstyle{definition}
\newtheorem{definition}[theorem]{Definition}
\theoremstyle{remark}
\newtheorem{remark}[theorem]{Remark}
\numberwithin{equation}{section}
\newcommand{\norm}[2]{\left\lVert#1\right\rVert_{#2}}
\newcommand\finsquare{
\end{proof}\medskip } 
\numberwithin{equation}{section}
\def\weight(#1,#2){c_{#1,#2}}
\def\dh{\hat{d}}
\def\calf{{\mathcal F}}
\def\calh{{\mathcal H}}
\def\calr{{\mathcal R}}
\def\D{\mathcal{D}}
\def\K{\mathcal{K}}
\def\1B{{\bf  1}}
\def\1B{{\bf  1}}
\newcommand{\NN}{\mathbb{N}}
\newcommand\be{\begin{equation}}
\newcommand\ee{\end{equation}}
\newcommand\ba{\begin{array}}
\newcommand\ea{\end{array}}
\newcommand{\bea}{\begin{eqnarray}}
\newcommand{\eea}{\end{eqnarray}}
\newcommand{\bean}{\begin{eqnarray*}}
\newcommand{\eean}{\end{eqnarray*}}
\def\rar{\rightarrow}
\newcommand{\R}{\mathds{R}}
\newcommand{\bfunc}{\mathbf{b}}
\newcommand{\y}{\mathbf{y}}
\newcommand{\Lagr}{\mathcal{L}}
\newcommand{\ufunc}{\mathbf{u}}
\DeclareMathOperator{\ReLU}{ReLU}
\DeclareMathOperator{\logistic}{logit}
\DeclareMathOperator{\softplus}{softplus}
\pgfplotsset{width=10cm,compat=1.9}
\begin{document}

\title[Classification with Runge-Kutta networks]{Classification with Runge-Kutta networks and feature space augmentation}


\author{Elisa Giesecke}
\address{Institut f\"ur Mathematik\\
            Humboldt-Universit\"at zu Berlin\\
            10099 Berlin, Germany}
\curraddr{}
\email{giesekee@hu-berlin.de}
\thanks{}

\author{Axel Kr\"oner}
\address{Institut für Mathematik\\
Martin-Luther-Universit\"at Halle-Wittenberg\\
Theodor-Lieser-Str. 5\\
06120 Halle}
\curraddr{}
\email{axel.kroener@mathematik.uni-halle.de}
\thanks{The second author is supported by DAAD project 57570343.}

\subjclass[2010 Mathematics Subject Classification.]{65L06, 68T07.}

\keywords{deep learning, Runge-Kutta networks, augmented neural ODEs, point classification.}

\date{\today}

\dedicatory{}

\begin{abstract}
In this paper we combine an approach based on Runge-Kutta Nets considered in [\emph{Benning et al., J. Comput. Dynamics, 9, 2019}] and a technique on augmenting the input space in [\emph{Dupont et al., NeurIPS}, 2019] to obtain network architectures which show a better numerical performance for deep neural networks in point and image classification problems. The approach is illustrated with several examples implemented in PyTorch.
\end{abstract}

\maketitle

\tableofcontents

\tableofcontents

\section{Introduction}\label{sec:intro}

In the framework of deep learning, network architectures can be derived from discretization schemes for ordinary differential equations including higher order, implicit, and  adaptive schemes, see, e.\,g., Benning et al. \cite{benning2019DLasOCP}, Celledoni et al. \cite{MR4308177},  E~\cite{weinan2017MLviaDynamicalSystems}, Ruiz-Balet and Zuazua \cite{BaletZuazua:2021}; 
%
the corresponding differential equation taken as starting point for designing neural networks is also called \textit{neural ordinary differential equation (NODE)}, cf. \cite{chen2018NODE}.  In this paper we combine sympletic partitioned Runge-Kutta (RK) methods for designing networks as proposed in~\cite{benning2019DLasOCP} with augmenting the input space as considered in Dupont, Doucet, and Teh~\cite{dupont2019ANODE}. This leads to an improved performance in experiments on binary and multiclass classification on two and three dimensional point datasets. Furthermore, we show that the observations generalize to image classification. The code to reproduce all experiments of this paper is based on PyTorch developed by the first author, see \cite{ARKN} and is available at GitHub 
\begin{center}
\texttt{https://github.com/ElisaGiesecke/augmented-RK-Nets}. 
\end{center}

%
%
%
%
%
%
\if{
\subsection{Architectures and ordinary differential equations}
Differential equations have been studied thoroughly, whereas theory in deep learning has emerged recently. Consequently, ODEs are better understood and optimization tools further developed so that it is easier to handle continuous dynamical systems compared to discrete ones in form of DNNs. The knowledge gained about their properties can then be used in order to draw conclusions about the properties of the respective neural network that the ODE is derived from. }\fi
\if{Thus, we can shed light into the black-box NN model and explain why they behave in a certain way. Additionally, continuous dynamical systems can be easily modified by imposing some kind of structure on them or adding constraints, see \cite[p.\,2]{weinan2017MLviaDynamicalSystems}, which can then be transferred to their discrete network analog. Based on this analysis, a more sophisticated choice can be made regarding which network architecture is suitable for achieving the criteria specified in Section \ref{sec:motivation}. }\fi
%
\if{
This approach has been particularly helpful with respect to stability issues \cite{haber2017stability}, see also \cite[p.\,1]{benning2019DLasOCP}. The objective is to avoid unstable forward propagation of data through the neural network, meaning that small perturbations of the input data should not lead to large changes in the network output. This is crucial for obtaining robust predictions because noise can otherwise cause wrong predictions and the model might not be able to generalize well to new data. Since the stability of the network corresponds to the stability of its underlying ODE, we are interested in bounding the solution $\y$ of \eqref{IVP_NN} at time $T$ by the initial condition at time zero \cite[p.\,4]{benning2019DLasOCP}. On the basis of spectral theory, \cite[sec.\,3]{haber2017stability} develop stability criteria for the generalized ResNet architecture \eqref{Euler}, which include conditions on the weight matrices $K^{[l]}$ and on the step size $h$. Moreover, they propose to add these to the optimization problem as constraints. This leads not only to a stable forward propagation of the trained network, but it also tackles the stability-related problem of vanishing or exploding gradients which might occur during training and that lets the gradient descent algorithm fail. In conclusion, ensuring stability addresses two key aspects of network design, training and generalization.
}\fi
%
The approach is motivated by the following observation stated in \cite{dupont2019ANODE}: The flow of NODEs is not able to represent certain functions, since their trajectories cannot intersect each other. Only features that are homeomorphic to the input space can evolve within the continuous model. We illustrate this fact by the  \verb|donut_2D| dataset depicted in Figure \ref{fig:data_examples}. We observe that mappings that separate data points of this set according to their class do not preserve the topology of the two dimensional input space and can therefore not be learned by models based on continuous dynamical systems in two space dimensions. Although ResNets can actually approximate these mappings because the discrete trajectories may cross each other, the learned flows are then highly complex. That means the points in the center of the donut would need to be squeezed through the gaps between the points on the ring as in Figure \ref{fig:width_donut_2D} -- a transformation which is difficult to learn. Based on this observation, Dupont et al.  \cite{dupont2019ANODE} shows that augmenting the feature space by at least one dimension leads to more expressive neural networks with simpler flows. Moreover, they also show that space augmentation improves stability properties. Thus, this method is commonly used and will here be combined with RK Nets.
If the input space is augmented by additional dimensions as described above, the ordinary differential equation defined on this larger space is called \textit{augmented neural ordinary differential equation (ANODE)}, cf. \cite{dupont2019ANODE}. 

\if{We  mention other schemes, as   \cite[sec.\,4]{haber2017stability} who apply the leapfrog and the Verlet integration techniques for optimizing two Hamiltonian system inspired networks, whereas \cite[sec.\,3]{chen2018NODE} make use of the implicit Adam method. Another promising idea is to employ multigrid methods such that the number of layers which corresponds to the grid mesh size varies across different stages of training, see multi-level learning \cite[sec.\,5.3]{haber2017stability} and \cite[p.\,2]{weinan2017MLviaDynamicalSystems}. 
}\fi
To obtain a discrete scheme such that optimization (i.\,e.\ deriving first-order optimality conditions) and discretization commute, we apply partitioned Runge-Kutta methods with non-vanishing weights and coefficients satisfying certain conditions, cf. \eqref{symplectic_RK} below, Sanz-Serna 
\cite{sanzserna2015symplecticRKandMore} and also  Hager \cite{MR1804658}. Since quadratic invariants are preserved in this case, these methods are called \textit{symplectic}. 
 As a result, the backpropagation can be obtained by first discretizing the ordinary differential equation with the proposed scheme and then deriving the adjoint equation or vice versa.
Interpreting this discretized system as a deep neural network has led to new models, which can be trained with methods fitted to their architecture.
We remark that RK Nets have also been considered in Raissi et al.~\cite{MR3881695}. This publication about physics informed networks has recently gained high attention.

\if{We remark that the property that optimization and discretization commute does not hold  only for Runge-Kutta methods satisfying \eqref{symplectic_RK}, but also for other methods, as B-series methods \cite{}.
}\fi%

The paper is organized as follows. In Section \ref{sec:ResNets-ODE} we recall the relation between ordinary differential equations and neural networks, in Section \ref{sec:training} we consider the training problem, and in Section \ref{sec:numerics} we present several numerical examples for point classification problems implemented in PyTorch and extend these experiments to image data. Finally, Section \ref{sec:outlook} gives an outlook on possible future research.


\section{Relation of residual networks to ordinary differential equations}\label{sec:ResNets-ODE}

\label{sec:ResNet_to_ODE}


We recall the basic structure of a feed-forward network, also known as \textit{multilayer perceptron (MLP)}, and of a residual network (ResNet) which can be associated with ordinary differential equations. 

We consider a network with $L$ layers and $d^{[l]}$ neurons in the $l$-th layer for $l = 0,\dots,L$.
 The first layer is called \textit{input layer} and the last layer is called \textit{output layer}. If there are layers in between (i.\,e.\  $L\geq 2$), these are called \textit{hidden layers} and the  neural network is called \textit{deep}, see 
 \cite{MR4027841}.
While a neuron of the input layer is fed by the given data directly, a neuron in one of the following layers gets its input from neurons in previous layers and transforms it to give a new output. The output of each layer is denoted by $y^{[l]} \in \R^{d^{[l]}}$, where the $i$-th entry in this vector corresponds to the output of the $i$-th neuron in the $l$-th layer.
Formally, the operations within a feed-forward network are defined as follows: We consider the commonly used activation functions $ \sigma(x): \R \to \R$ acting componentwise on the arguments and given by one of the functions
\be
\begin{aligned}
\operatorname{ReLU}(x) &:= \max(0, x);&
\operatorname{softplus}(x) &:= log(1 + e^x );\\
\operatorname{logit}(x)&:=1/(1+ e^{-x});&
\operatorname{tanh}(x) &:= 2 \operatorname{logit}(2x) - 1
\end{aligned}
\ee
for $x \in \R$.
For given input data $x \in \R^d$ with $d \in \NN$ we set $y^{[0]}:=x$ (i.\,e.\ $d^{[0]}=d$) and compute for the layers  $l=0,\ldots,L-1$
\begin{align}\label{iter-z}
    z^{[l+1]} &:=K^{[l]} y^{[l]} + b^{[l]}\in \R^{d^{[l+1]}} \quad \text{and} \quad
    y^{[l+1]}:=\sigma\left(z^{[l+1]}\right)
\end{align}
with weight matrix $K^{[l]} \in \R^{d^{[l+1]} \times d^{[l]}}$ and bias vector $b^{[l]} \in \R^{d^{[l+1]}}$. \if{The $i$-th neuron in the $(l+1)$-th layer for $i \in \{1,\ldots, d^{[l+1]}\}$ produces the value
\begin{align}
    y^{[l+1]}_i = \sigma\left(\sum_{j=1}^{d^{[l]}} K^{[l]}_{ij} y^{[l]}_j+b^{[l]}_i\right)
\end{align}
from the output values of all neurons in the $l$-th layer. }\fi

Generally, the number of neurons per layer can vary in neural networks. The dimensions of layers linked via a residual connection, however, have to be equal. This motivates the approach of keeping the width of the network constant across all layers. Consequently, the input data dimension $d$ would determine the number of neurons in each hidden layer, as well as in the output layer. To ease this restriction on the network architecture, we can add $d^\ast \in \NN$ dimensions to the input data $x \in \R^d$ by filling the additional components by zeros, resulting in the augmented input 
\be 
\hat{x}=(x^\top,0,\ldots,0)^\top \in \R^{d+d^\ast}.
\ee 
This method is known as \textit{feature space augmentation}  and results in significant improvements with respect to expressivity, training, and generalization of the network model, see Dupont et al. \cite{dupont2019ANODE}. We obtain
$d^{[l]}=\hat{d}$ for all $l=0,\ldots,L$ where the dimension of the feature space is determined by $\hat{d}:=d+d^\ast$. That way, we have
\begin{align}
    u^{[l]}=\left(K^{[l]},b^{[l]}\right) \in \R^{\hat{d}\times\hat{d}} \times \R^{\hat{d}} \quad \text{for all } l=0,\ldots,L-1
\end{align}
resulting in the  constant number $m \coloneqq \hat{d}^2 + \hat{d}$ of parameters per layer. Hence, we can define a function $f:\R^m \times \R^{\hat{d}} \to \R^{\hat{d}}$ by 
\be
f(u,y):=\sigma(K y + b),\quad u=(K,b)\in \R^{\hat{d}\times\hat{d}} \times \R^{\hat{d}},
\ee
%
so that the transformation between layers as specified in \eqref{iter-z} becomes 
\begin{align}
    y^{[l+1]}=f(u^{[l]},y^{[l]}) \quad \text{for all } l=0,\ldots,L-1
    \label{feedforward_net}
\end{align}
with $y^{[0]}=\hat{x}$.

Alternatively, we can introduce the forward propagation for $h>0$ by 
\begin{align}
    y^{[l+1]}=y^{[l]}+hf(u^{[l]},y^{[l]}) \quad \text{for all } l=0,\ldots,L-1.
    \label{Euler}
\end{align}
For $h=1$ we recover a conventional ResNet. 
Formally, these networks can be interpreted as explicit forward Euler discretizations of ordinary differential equations  of type
\begin{align}
    \dot{\y}(t)&=\sigma(\mathbf{K}(t) \y(t) + \bfunc(t)) \quad \text{for a.\,a. } t \in [0,T],\quad \y(0)=\hat{x}
    \label{ODE_NN}
\end{align}
with appropriate chosen $\mathbf{K}\colon [0,T] \rar \R^{\dh \times \dh}$, $\mathbf{b}\colon [0,T]\rar \R^{\dh}$ and $\y\colon [0,T] \rar \R^{\dh}$. Setting $\ufunc=(\mathbf{K},\bfunc)$, the introduced functions are required to match the model parameters and outputs of the respective layers at time $t_l=lh$ with $h=T/L$, i.\,e. 
\begin{equation}
\begin{aligned}
    \ufunc(t_l) & = u^{[l]} \quad \text{for }l=0,\ldots, L-1,\\
    \y(t_l) & = y^{[l]} \quad \text{for } l=0,\ldots, L.
    \label{t_disc}
\end{aligned}
\end{equation}
Furthermore, we can reformulate \eqref{ODE_NN} as 
\begin{align}\label{ODE-cont}
    \dot{\y}(t)&=f(\ufunc(t),\y(t)) \quad \text{for a.\,a. } t \in [0,T],\quad \y(0)=\hat{x}.
\end{align}

\if{In order to specify the function $f$ describing the dynamics of the ODE \eqref{ODE_NN}, we note that the model parameters of each layer consist of a weight matrix and a bias vector, that is $u^{[l]}=(K^{[l]},b^{[l]})$. Via \eqref{t_disc}, the same structure is imposed on the function $\ufunc$, leading to a time-dependent matrix and vector, denoted by $\ufunc(t)=(\K(t),\bfunc(t))$. Furthermore, we recall that $f$ is a non-linear activation after an affine transformation, see \eqref{f_disc}. Hence, the right-hand side of \eqref{ODE_NN} is given by
\begin{align}
    f(\ufunc(t),\y(t))= \sigma(\K(t) \y(t) + \bfunc(t)), \quad t \in [0,T],
    \label{f_cont}
\end{align}
cf.\ \cite[p.\,5]{haber2017stability}.}\fi


The focus of this paper is placed on the numerical performance of the considered approach and not on an analytical consideration. Therefore, we do not discuss existence and uniqueness for the differential equation here. In the following, we assume there exists a well-posed solution operator mapping $\ufunc$ to the solution $\y[\ufunc]$.

Starting now from \eqref{ODE-cont} we derive Runge-Kutta Nets
by discretizing the equation with RK schemes determined by triples $(A,\beta, c)\in \R^{s\times s} \times \R^s \times \R^s$ according to the RK tableau shown in Figure~\ref{fig:Butcher_tableau}, where  $s$ denotes the number of stages, $A=(a_{i,j})_{i,j=1}^s$ the RK matrix, $\beta = (\beta_i)_{i=1}^s$ the weights, and $c = (c_i)_{i=1}^s$ the nodes.
\begin{figure}
    \centering
    \begin{minipage}{.24\textwidth}
    \begin{align*}
    \begin{array}
    {c|c}
    c & A\\
    \hline
    & \beta^\top 
    \end{array}
    \end{align*}
    \end{minipage}
    \begin{minipage}{.24\textwidth}
    \begin{align*}
    \begin{array}
    {c|c}
    0\\
    \hline
    & 1 
    \end{array}
    \end{align*}
    \end{minipage}
    \begin{minipage}{.34\textwidth}
    \begin{align*}
    \begin{array}
    {c|cccc}
    0\\
    \frac{1}{2} & \frac{1}{2}\\
    \frac{1}{2} &0 &\frac{1}{2} \\
    1& 0& 0& 1\\
    \hline
    & \frac{1}{6} &\frac{1}{3} &\frac{1}{3} &\frac{1}{6} 
    \end{array}
    \end{align*}
    \end{minipage}
    \caption[Butcher tableaus]{Butcher tableaus: (from left to right) general form, forward Euler and classic RK4.
    }
    \label{fig:Butcher_tableau}
\end{figure}
Then,  for  $l=0,\dots, L-1$ the scheme is given by
\begin{equation}
\begin{aligned}
    & y^{[l+1]}= y^{[l]} + h \sum_{i=1}^s \beta_i f(u^{[l]}_i,y^{[l]}_i) \quad \text{with}\\ 
    &y^{[l]}_i = y^{[l]} +  h \sum_{j=1}^s a_{i,j} f(u^{[l]}_j,y^{[l]}_j) \quad \text{and} \quad u^{[l]}_i = \ufunc(t_l + h c_i)
    \quad \text{for all } i=1,\dots, s,
    \label{state_equ_RK}
\end{aligned}
\end{equation}
see, e.\,g., \cite{sanzserna1992symplecticRK,sanzserna2015symplecticRKandMore}. Note that the method is explicit if $a_{i,j}=0$ for all $j \geq i$ and otherwise implicit. Apart from the forward Euler method, the classic RK or so-called RK4 method is one of the most frequently used explicit methods. 
%
\if{
\begin{figure}
    \centering
    \begin{minipage}{.24\textwidth}
    \begin{align*}
    \begin{array}
    {c|c}
    c & A\\
    \hline
    & \beta^\top 
    \end{array}
    \end{align*}
    \end{minipage}
    \begin{minipage}{.24\textwidth}
    \begin{align*}
    \begin{array}
    {c|c}
    0\\
    \hline
    & 1 
    \end{array}
    \end{align*}
    \end{minipage}
    \begin{minipage}{.24\textwidth}
    \begin{align*}
    \begin{array}
    {c|c}
    1 & 1\\
    \hline
    & 1 
    \end{array}
    \end{align*}
    \end{minipage}
    \begin{minipage}{.24\textwidth}
    \begin{align*}
    \begin{array}
    {c|cccc}
    0\\
    \frac{1}{2} & \frac{1}{2}\\
    \frac{1}{2} &0 &\frac{1}{2} \\
    1& 0& 0& 1\\
    \hline
    & \frac{1}{6} &\frac{1}{3} &\frac{1}{3} &\frac{1}{6} 
    \end{array}
    \end{align*}
    \end{minipage}
    \caption[Butcher tableaus]{Butcher tableaus: (from left to right) general form, forward Euler, backward Euler and classic RK4. Adapted from \cite[p.\,294]{sanzserna1992symplecticRK} and \cite[p.\,21]{benning2019DLasOCP}.}
    \label{fig:Butcher_tableau}
\end{figure}
}\fi
%
\if{
When examining the iteration in \eqref{state_equ_RK}, we see that for each step not only the value of the control function $u^{[l]}$ at time $t_l=lh$ is required, but a whole set of discrete function values $(u^{[l]}_i)_{i=1}^s$ needs to be extracted from the control via the RK nodes $(c_i)_{i=1}^s$. }\fi
In practice, the internal stages $(u^{[l]}_i)_{i=1}^s$ are replaced by the single value $u^{[l]}$, see \cite{benning2019DLasOCP}. The simplified numerical scheme is thus given by
\begin{equation}
\begin{aligned}
    &y^{[l+1]}  = y^{[l]} + h \sum_{i=1}^s \beta_i f(u^{[l]},y^{[l]}_i)\\
    \text{with} \quad 
    &y^{[l]}_i = y^{[l]} +  h \sum_{j=1}^s a_{i,j} f(u^{[l]},y^{[l]}_j) \quad \text{for all } i=1,\dots, s.
    \label{simplified_RK}
\end{aligned}
\end{equation}
With that simplification, the RK coefficient $c$ does not appear in the discretization anymore and is therefore irrelevant, as is the case for all autonomous systems \cite{sanzserna2015symplecticRKandMore}. 
In the scope of this paper, only RK methods are considered, although this approach can be extended to a wide range of symplectic integration techniques.
\section{The training problem for classification}\label{sec:training}

Let us consider a classification problem with $K$ classes. For assigning a data sample $x \in \R^d$ to a particular class, we choose a weight matrix $W \in \R^{K \times \hat{d}}$ and bias vector $\mu \in \R^K$ to construct an affine classifier, as well as the $\operatorname{softmax}$ function $\calh \colon \R^K \rar \R^K$ defined by $ \calh_k(z) :=e^{z_k}/(\sum^K_{j=1}e^{z_j})$ to transform the result into a probability vector. Let $y \in \R^{\hat{d}}$ be the output when feeding a given model with the input $x$. Then, we classify according to the following rule: $x$ belongs to the class associated to the largest entry in the vector $\mathcal{H}\left(W y+\mu\right)$.

For supervised learning, let the training data be given as $((x_n,c_n))_{n=1}^N$ consisting of the samples $x_n \in \R^d$ and the labels $c_n \in \R^K$ with entries representing the probability of the sample belonging to the respective classes. In order to train a deep learning model, we require a suitable cost function. Considering the model defined by the continuous dynamical system \eqref{ODE-cont}, we define $\y_n[\ufunc]$ to be the solution of the ordinary differential equation for a given parameter $\ufunc$ and with initial value $x_n$. Using the cross-entropy loss, the cost $\hat{\calf}\colon L^\infty(0,T;\R^m) \rar \R$ is given by 
\begin{align}
    \hat{\calf}\left(\ufunc\right):=\frac{1}{N} \sum_{n=1}^N -\mathds{1}^\top \left[c_n \circ \log\left(\mathcal{H}\left(W \y_n[\ufunc](T)+\mu\right)\right)\right].
    \label{F_cont_CE}
\end{align}
where $\mathds{1}$ is the vector of all ones and $\circ$ denotes the componentwise multiplication of vectors, also known as the Hadamard product. Often, an appropriate regularization term is included in order to avoid overfitting from a numerical viewpoint or to guarantee certain theoretical properties from an analytical viewpoint. Since the latter is not addressed in this paper and the former aspect does not appear in the numerical examples presented in Section \ref{sec:numerics}, we do not consider regularizers here. Thus, the training problem reads
\begin{align}\label{problem-cont}
    \min \hat{\calf}\left(\ufunc\right),\; \text{s.t. } \eqref{ODE-cont}.
\end{align}
We remark that it is common practice to optimize not only with respect to the model parameters~$\ufunc$, but also with respect to the parameters of the affine classifier $W$ and $\mu$. 

Interpreting $\ufunc$ as control and $\y$ as state, \eqref{problem-cont} can be viewed as an optimal control problem with ODE constraint. Generally, there are two approaches to solve this problem numerically: We can either discretize the optimal control problem first and then derive first-order  optimality conditions for this discrete problem formulation, or we can take the optimality conditions of the continuous optimal control problem and discretize them. We choose a discretization such that the optimization and discretization formally commute.

Following the first-discretize-then-optimize approach also known as direct approach \cite[Sec.\,4.3]{sanzserna2015symplecticRKandMore}, we will derive the first-order conditions for optimality for the discrete optimal control problem formally. If Runge-Kutta Nets are applied, this becomes the network training problem with discrete cost $\calf\colon \R^{m \cdot L} \rar \R$, that is
\begin{align}\label{problem-discr}
    \min \calf(u):=\frac{1}{N} \sum_{n=1}^N -\mathds{1}^\top \left[c_n \circ \log\left(\mathcal{H}\left(W y_n[u]^{[L]}+\mu\right)\right)\right],\; \text{s.t. } \eqref{state_equ_RK}.
\end{align}
Here, we define $y_n[u]^{[l]}$ to be the output of the $l$-th layer when applying the network with given parameters $u=\left(u^{[l]}\right)_{l=0}^{L-1}$ to the training sample $x_n$. 
\if{\be
\calr(u):=\frac{\lambda}{2}\norm{\mathbf{K}}{\infty}
\ee
with 
\be
\norm{\mathbf{K}}{\infty}:=XXX.
\ee}\fi
\if{
Setting $U_{ad}=\R^m$ with the pointwise formulation in \eqref{U_ad}, the training problem is given by 
Discretizing the control and the state in time as in \eqref{t_disc} and choosing a suitable numerical method for solving the IVP denoted by $\D_f^h: \R^m \times \R^{\hat{d}} \to \R^{\hat{d}}$, we arrive at the discretized OCP 
\be
\begin{aligned}
    &\min_{u, y} F(y^{[L]}) \text{ s.t. }\\ 
    & y^{[l+1]}=\D_f^h(u^{[l]},y^{[l]}) \text{ for all } l=0,\dots,L-1 \text{ and } y^{[0]}=\hat{x},\\
    & u=(u^{[l]})_{l=0}^{L-1},\\ &y=(y^{[l]})_{l=0}^{L},\text{ where } u^{[l]} \in \R^m\text{ and }y^{[l]} \in \R^{\hat{d}}\text{ for all }l.
    \label{OCP_disc}
\end{aligned}
\ee
 }\fi
We recall from \cite{benning2019DLasOCP} the derivation of the discrete optimality conditions of
\eqref{problem-discr} where the equation is discretized by a RK method defined iteratively for $l=0,\dots,L-1$ by
\begin{align}
    y^{[l+1]} &= y^{[l]} + h \sum_{i=1}^s \beta_i f^{[l]}_i, \label{state_equ_RK_1}\\
    f^{[l]}_i &= f(u^{[l]}_i,y^{[l]}_i) \quad \text{for all } i=1,\dots, s \quad \text{and} \label{state_equ_RK_2}\\
    y^{[l]}_i &= y^{[l]} +  h \sum_{j=1}^s a_{i,j} f^{[l]}_j \quad \text{for all } i=1,\dots, s \label{state_equ_RK_3}
\end{align}
with initial condition $y^{[0]}=\hat{x}$, see, e.\,g., 
\cite{benning2019DLasOCP, sanzserna2015symplecticRKandMore}.
Note that this system can be reduced to two equations by removing one of the redundant variables: For instance, we could eliminate $f_i^{[l]}$ by plugging \eqref{state_equ_RK_2} into \eqref{state_equ_RK_1} and \eqref{state_equ_RK_3}. This yields the previous formulation \eqref{state_equ_RK} of a general RK scheme. However, for defining the Lagrangian, we will instead remove $y_i^{[l]}$ by plugging \eqref{state_equ_RK_3} into \eqref{state_equ_RK_2} because this form will be more tractable in further computations. Hence,
for 
\be
\begin{aligned}
    U^{[l]}&:=(u_1^{[l]},\dots,u^{[l]}_s)^{\top};& U&:=(U^{[0]},\dots,U^{[L-1]})^{\top}\in \R^{m \times s \times L};\\
    F^{[l]}&:=(f_1^{[l]},\dots,f^{[l]}_s)^{\top};& F&:=(F^{[0]},\dots,F^{[L-1]})^{\top}\in \R^{\hat{d} \times s \times L};\\
      \Xi^{[l]}&:=(\xi_1^{[l]},\dots,\xi^{[l]}_s)^{\top};& \Xi&:=(\Xi^{[0]},\dots,\Xi^{[L-1]})^{\top}\in \R^{\hat{d} \times s \times L};\\
      Y&:=(y^{[0]},\dots,y^{[L]})^{\top} \in \R^{\hat{d} \times (L+1)};& P&:=(p^{[0]},\dots,p^{[L]})^{\top}\in \R^{\hat{d} \times (L+1)};\\
\end{aligned}
\ee
we define the Lagrangian $\Lagr= \Lagr \left(U,Y,F,P,\Xi\right)$ with
\be
\begin{aligned}
    \Lagr &:= \calf\left(y^{[L]}\right) + p^{[0]} \cdot \left(\hat{x}-y^{[0]}\right)
    + \sum_{l=0}^{L-1}\left[ p^{[l+1]} \cdot \left(y^{[l]} + h \sum_{i=1}^s \beta_i f_i^{[l]}-y^{[l+1]}\right) \right.\\
    &\quad \left. + \sum_{i=1}^s \xi_i^{[l]} \cdot \left(f\Big(u_i^{[l]},y^{[l]} +  h \sum_{j=1}^s a_{i,j} f^{[l]}_j\Big)-f_i^{[l]}\right)\right]
\end{aligned}
\ee
with Lagrange multipliers $P$ and $\Xi$. 

\if{If we differentiate $\Lagr$ with respect to all Lagrange multipliers $p^{[l]}$ and $\xi_i^{[l]}$ and set the derivatives to zero, we obtain as expected the RK discretization of the state equation as formulated in \eqref{state_equ_RK_1}--\eqref{state_equ_RK_3} with its initial condition. }\fi

Setting the derivative of $\Lagr$ with respect to all discrete state values $y^{[l]}$ in direction $z^{[l]}$ to zero yields
\be
\begin{aligned}
    0 &= \nabla \calf\left(y^{[L]}\right) \cdot z^{[L]} + p^{[0]} \cdot \left(-z^{[0]}\right)\\
    &\quad + \sum_{l=0}^{L-1}\left[ p^{[l+1]} \cdot \left(z^{[l]} -     z^{[l+1]}\right) + \sum_{i=1}^s \xi_i^{[l]} \cdot f_y \Big(u_i^{[l]},y^{[l]} +  h \sum_{j=1}^s a_{i,j} f^{[l]}_j\Big)z^{[l]}\right].
\end{aligned}
\ee
Reordering the summands, we get
\be
\begin{aligned}
    0 &= \left(\nabla \calf\left(y^{[L]}\right) - p^{[L]}\right) \cdot z^{[L]}\\
    &\quad + \sum_{l=0}^{L-1}\left[\left(p^{[l+1]}-p^{[l]} + \sum_{i=1}^s f_y \Big(u_i^{[l]},y^{[l]} +  h \sum_{j=1}^s a_{i,j} f^{[l]}_j\Big)^\top \xi_i^{[l]}\right) \cdot z^{[l]}\right].
\end{aligned}
\ee
Since the directions $z^{[l]} \in \R^{\hat{d}}$ are arbitrary, it follows
\begin{align}
    0 &= \nabla \calf\left(y^{[L]}\right) - p^{[L]} \quad \text{and}\\
    0 &= p^{[l+1]}-p^{[l]} + \sum_{i=1}^s f_y \Big(u_i^{[l]},y^{[l]} +  h \sum_{j=1}^s a_{i,j} f^{[l]}_j\Big)^\top \xi_i^{[l]} \quad \text{for all } l=0,\dots,L-1.
\end{align}
This is equivalent to the iteration
\begin{align}
   p^{[l+1]} = p^{[l]} - \sum_{i=1}^s f_y \Big(u_i^{[l]},y^{[l]} +  h \sum_{j=1}^s a_{i,j} f^{[l]}_j\Big)^\top \xi_i^{[l]} \quad \text{for all } l=0,\dots,L-1
   \label{Lagr_deriv_1}
\end{align}
with boundary condition $p^{[L]} = \nabla \calf\left(y^{[L]}\right)$.

Next, we let the derivative of $\Lagr$ with respect to $f_i^{[l]}$ in direction $\delta f_i^{[l]}$ vanish. That is
\be
\begin{aligned}
  0 &= \sum_{l=0}^{L-1}\left[ p^{[l+1]} \cdot h \sum_{i=1}^s \beta_i \delta f_i^{[l]} \right. \\
  &\quad + \left. \sum_{i=1}^s \xi_i^{[l]} \cdot \left(f_y \Big(u_i^{[l]},y^{[l]} +  h \sum_{j=1}^s a_{i,j} f^{[l]}_j\Big) h \sum_{j=1}^s a_{i,j} g^{[l]}_j-\delta f_i^{[l]}\right)\right].
\end{aligned}
\ee
Changing the order of summation, we obtain
\begin{align}
  0 = \sum_{l=0}^{L-1}\sum_{i=1}^s \left(h \beta_i p^{[l+1]} - \xi_i^{[l]} + h \sum_{j=1}^s a_{j,i} f_y \Big(u_j^{[l]},y^{[l]} +  h \sum_{k=1}^s a_{j,k} f^{[l]}_k\Big)^\top \xi_j^{[l]}\right) \cdot \delta f_i^{[l]}.
\end{align}
Because each of the directions $\delta f_i^{[l]} \in \R^{\hat{d}}$ is arbitrary, we have for all $l=0,\dots,L-1$ and $i=0,\dots,s$
\begin{align}
  0 = h \beta_i p^{[l+1]} - \xi_i^{[l]} + h \sum_{j=1}^s a_{j,i} f_y \Big(u_j^{[l]},y^{[l]} +  h \sum_{k=1}^s a_{j,k} f^{[l]}_k\Big)^\top \xi_j^{[l]},
\end{align}
or equivalently
\begin{align}
   \xi_i^{[l]} = h \left[\beta_i p^{[l+1]} + \sum_{j=1}^s a_{j,i} f_y \Big(u_j^{[l]},y^{[l]} +  h \sum_{k=1}^s a_{j,k} f^{[l]}_k\Big)^\top \xi_j^{[l]}\right].
   \label{Lagr_deriv_2}
\end{align}

Finally, we consider the derivative  of $\Lagr$ with respect to the discrete control values $u_i^{[l]}$ in direction $v_i^{[l]}$ and set them to zero. This yields
\be
\begin{aligned}
    0 &= \sum_{l=0}^{L-1} \sum_{i=1}^s \xi_i^{[l]} \cdot f_u \Big(u_i^{[l]},y^{[l]} +  h \sum_{j=1}^s a_{i,j} f^{[l]}_j\Big)v_i^{[l]}\\
     &= \sum_{l=0}^{L-1} \sum_{i=1}^s f_u \Big(u_i^{[l]},y^{[l]} + h \sum_{j=1}^s a_{i,j} f^{[l]}_j\Big)^\top\xi_i^{[l]} \cdot v_i^{[l]}.
\end{aligned}
\ee
Using again the fact that the directions $v_i^{[l]} \in \R^m$ can be chosen arbitrarily, we get for all $l=0,\dots,L-1$ and $i=1,\dots,s$
\begin{align}
    0 = f_u \Big(u_i^{[l]},y^{[l]} +  h \sum_{j=1}^s a_{i,j} f^{[l]}_j\Big)^\top\xi_i^{[l]}.
    \label{Lagr_deriv_3}
\end{align}

Now, we assume that $\beta_i \neq 0$ for all $i=1,\dots,s$. Then, we rewrite the results \eqref{Lagr_deriv_1}, \eqref{Lagr_deriv_2} and \eqref{Lagr_deriv_3} with $p_i^{[l]} \coloneqq \xi_i^{[l]}/(h\beta_i)$ and $y^{[l]}_i$ as defined in \eqref{state_equ_RK_3} for all $l=0,\dots,L-1$ and $i=1,\dots,s$. This leads to the iteration
\begin{align}
    p^{[l+1]} = p^{[l]} - h \sum_{i=1}^s \beta_i f_y \left(u_i^{[l]},y_i^{[l]}\right)^\top p_i^{[l]} \quad \text{for } l=0,\dots,L-1, \quad p^{[L]} = \nabla \calf\left(y^{[L]}\right)
\end{align}
with
\be
\begin{aligned}
    p_i^{[l]} &= p^{[l+1]} + h \sum_{j=1}^s \frac{a_{j,i}\beta_j}{\beta_i} f_y \left(u_j^{[l]},y_j^{[l]}\right)^\top p_j^{[l]}\\
    &= p^{[l]} - h \sum_{j=1}^s \left(\beta_j - \frac{a_{j,i}\beta_j}{\beta_i}\right) f_y \left(u_j^{[l]},y_j^{[l]}\right)^\top p_j^{[l]} \quad \text{for all } i=1,\dots,s,
\end{aligned}
\ee
as well as to the first order necessary condition for optimality
\begin{align}
    f_u \left(u_i^{[l]},y_i^{[l]}\right)^\top p_i^{[l]} = 0 \quad \text{for all } l=0,\dots,L-1 \text{ and } i=1,\dots,s.
    \label{optf_cond_RK}
\end{align}
Defining coefficients $(\tilde{A}, \tilde{\beta}, \tilde{c})$ via the conditions
\begin{align}
    \beta_i = \tilde{\beta_i}, \quad \beta_i\tilde{a_{i,j}}+\tilde{\beta_j} a_{j,i}-\beta_i \tilde{\beta_j}=0, \quad c_i=\tilde{c}_i \quad \text{for all } i,j=1,\dots,s
    \label{symplectic_RK}
\end{align}
\if{as, for instance, in \cite[p.\,9]{benning2019DLasOCP}, \cite[p.\,40]{bonnans2008OCofODEs}, \cite[p.\,295]{sanzserna1992symplecticRK} or \cite[p.\,7]{sanzserna2015symplecticRKandMore},}\fi we obtain
\begin{align}
    p^{[l+1]} &= p^{[l]} + h \sum_{i=1}^s \tilde{\beta}_i g^{[l]}_i, \label{adj_equ_RK_1}\\
    g^{[l]}_i &= -f_y(u^{[l]}_i,y^{[l]}_i)^\top p^{[l]}_i \quad \text{for all } i=1,\dots, s \quad \text{and} \label{adj_equ_RK_2}\\
    p^{[l]}_i &= p^{[l]} +  h \sum_{j=1}^s \tilde{a}_{i,j} g^{[l]}_j \quad \text{for all } i=1,\dots, s \label{adj_equ_RK_3}
\end{align}
with final condition $p^{[L]} = \nabla F(y^{[L]})$. \if{, see, e.\,g., \cite[p.\,8]{benning2019DLasOCP}, \cite[p.\,40]{bonnans2008OCofODEs} and \cite[p.\,19]{sanzserna2015symplecticRKandMore}. }\fi 

When deriving the adjoint equation of \eqref{problem-cont} in the first-optimize-then-discretize or so-called indirect approach, we find that \eqref{adj_equ_RK_1}--\eqref{adj_equ_RK_3} is precisely its RK discretization. Together with the RK discretization of the state equation given in \eqref{state_equ_RK_1}--\eqref{state_equ_RK_3}, this system is called \textit{partitioned RK method}.
Furthermore, it can be shown that \eqref{optf_cond_RK} are the discretized optimality conditions of the continuous optimal control problem. Hence, under the conditions \eqref{symplectic_RK} deriving optimality conditions and discretizing commute (i.e. formally the direct and indirect approach are equivalent), cf., e.g., 
\cite{benning2019DLasOCP, sanzserna1992symplecticRK}.
\if{We note that this is precisely a RK discretization of the adjoint equation \eqref{adj_equ_NN} and  \eqref{opt_cond_RK} are the discretized optimality conditions of the continuous control problem as derived in \eqref{opt_cond_NN}. Hence, the method of constructing optimality conditions and then discretizing them, which is known as the indirect approach 
is formally equivalent to the direct approach presented here. 
}\fi

\if{
\section{Experimental Design}

In order to test ODE based neural networks, we will perform numerical experiments. As in the previous chapter, we will focus on RK methods for discretizing the underlying ODE, and thus call the resulting models \textit{RK Nets}. These networks are employed to solve simple point classification tasks, in which data points in a finite dimensional space and a finite number of possible classes are given. The goal is to assign a class to each point according to its position in space. Since we pose a supervised learning problem, the NN is trained on a set of labelled points before it is tested on another set of points, whose labels are only used to evaluate the performance of the model.

The experiments are conducted on various datasets with points in different dimensions belonging to two or more classes. We will compare the performance of several network architectures on these sets while modifying their hyperparameters. These include not only the parameters referring to the model structure such as activation function, depth and width, but also the ones determining the training process. Depending on the optimization algorithm, these are for instance batch size, number of epochs, learning rate and momentum, see, e.\,g., \cite{radhakrishnan2017hyperparameters}. 

Furthermore, we will examine the inner workings of the networks by observing the transformation of the data points in the feature space while moving through the layers and tracing their trajectories. Interpreting the flow of data through the network, we intend to give explanations of why a certain network has a poor or high prediction accuracy on a given dataset.
}\fi


\section{Numerical results and interpretation}\label{sec:numerics}

In this section, we will conduct numerical experiments on point classification and interpret their results. The network architectures which will be tested here are the two Runge-Kutta Nets, \verb|EulerNet| and \verb|RK4Net|, defined by the forward propagation \eqref{simplified_RK} with RK coefficients given by their respective tableaus in Figure \ref{fig:Butcher_tableau}. Their performance will be compared to a conventional feed-forward network implemented as \verb|StandardNet| given in  \eqref{feedforward_net}. All point datasets used in the experiments are created manually. For binary classification in two dimensions, the sets are inspired by \cite{benning2019DLasOCP} and depicted in Figure \ref{fig:data_examples}. Moreover, the networks are applied to higher dimensional datasets with multiple classes as illustrated in Figure \ref{fig:comparison_datasets}. To shorten notation, the dimension is abbreviated as D and the number of classes as C. Before comparing the different neural network architectures, we will examine the effect of some central hyperparameters such as width, depth and activation function in order to tune them in an appropriate way. Then, we will investigate the implications of the obtained results on image classification. Since all implemented loss functions and optimization methods produce very similar results, we will limit our experiments exclusively to the use of cross-entropy loss and the Adam algorithm with a batch size of five. Although we do not observe significant differences when additionally optimizing over $W$ and $\mu$, we include these parameters in the training problem for all numerical examples.

The code for these experiments provided at \cite{ARKN} is written in Python (version~3.6.12) using the deep learning framework PyTorch (version~1.7.1). More precisely, the networks are constructed with the help of pre-implemented layers, which we connected in such a way as to give the desired network architecture. Furthermore, they are trained with loss functions and optimizers provided by PyTorch. Besides that, we have created our own tools for plotting the data, tracking metrics during the training and displaying the resultant prediction. In order to shed light on the data processing within the neural network, it is vital to visualize the transformation of the data points in the feature space while moving through the layers. Since the dimensionality of the feature space is often large, using dimensionality reduction techniques is indispensable for producing meaningful plots. Apart from just selecting some coordinates and ignoring the remaining ones, the code enables us to project embeddings to a lower dimensional space via principal component analysis~(PCA).

\begin{figure}
    \centering
    \includegraphics[trim={0 0 0 1.255cm}, clip, width=3cm]{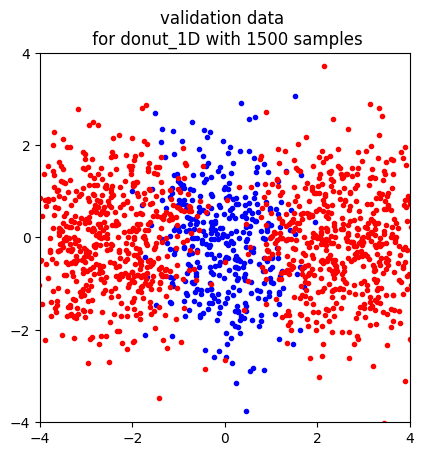}
    \includegraphics[trim={0 0 0 1.255cm}, clip, width=3cm]{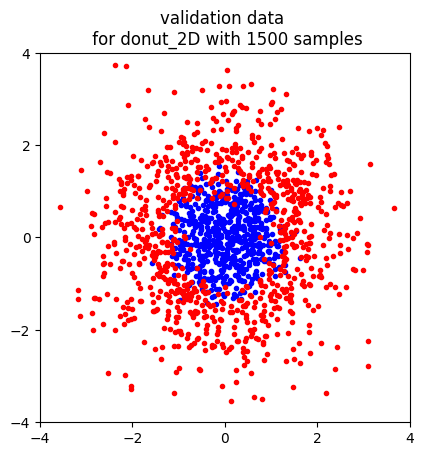}
    \includegraphics[trim={0 0 0 1.255cm}, clip, width=3cm]{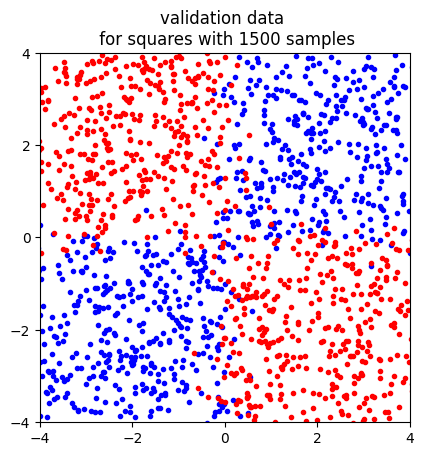}
    \includegraphics[trim={0 0 0 1.255cm}, clip, width=3cm]{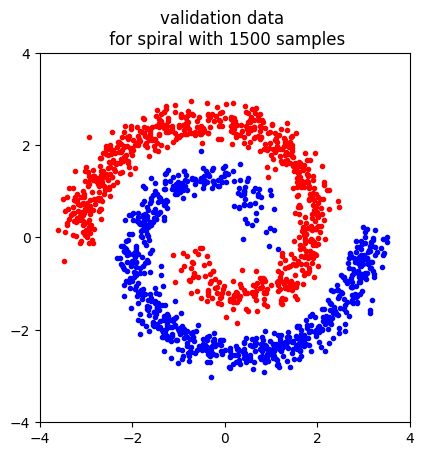}
    \caption[Two dimensional datasets for binary point classification]{Two dimensional datasets for binary point classification with 1500 samples each: (from left to right) \texttt{donut\_1D}, \texttt{donut\_2D}, \texttt{squares} and \texttt{spiral}.}
    \label{fig:data_examples}
\end{figure}

\subsection{Experiments on network width}

\label{sec:width}

First, we aim to determine a suitable width for network models based on ODEs in order to ensure that the choice of this hyperparameter does not hinder their predictive capability. For that, we consider the basic datasets in 2D with only two classes depicted in Figure \ref{fig:data_examples} which exhibit different topological properties.
Note that features of NODEs preserve the topology of the input space, see, e.g., \cite[Proposition 3]{dupont2019ANODE}. This implies that classifying data points in a feature space of the same dimension as the input space may cause difficulties as already described in Section~\ref{sec:intro}.

\begin{figure}
    \centering
    \setlength\tabcolsep{0pt}
    \settowidth\rotheadsize{ANODE}
    \begin{tabularx}{\linewidth}{l ccc}
        & {\small trajectories} & {\small transformation} & {\small prediction}\\ 
        \rothead{\centering NODE}       
        & \includegraphics[width=4.1cm,valign=m]{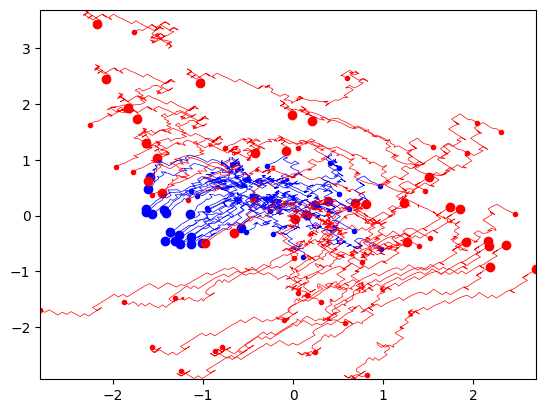}
        & \includegraphics[width=4.1cm,valign=m]{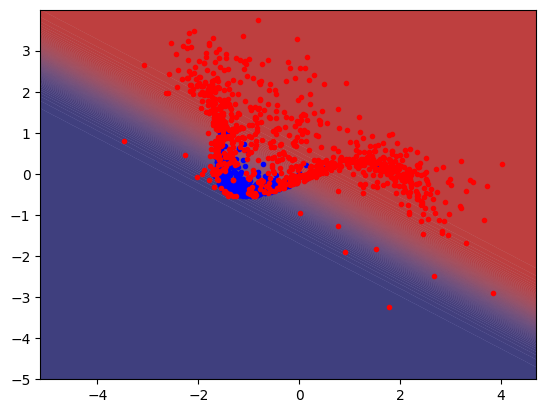}
        &\includegraphics[trim={0 0 0 1.255cm}, clip, width=4.1cm,valign=m]{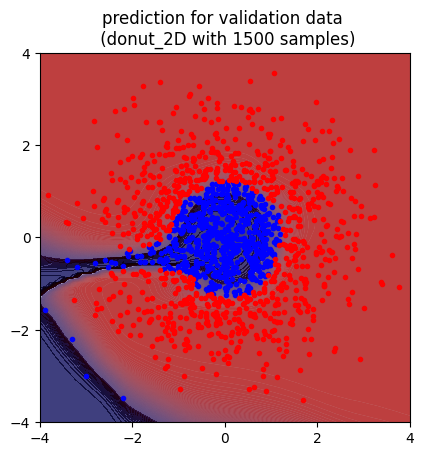}\\ 
        \addlinespace[2pt]
        \rothead{\centering ANODE} 
        & \includegraphics[width=4.1cm,valign=m]{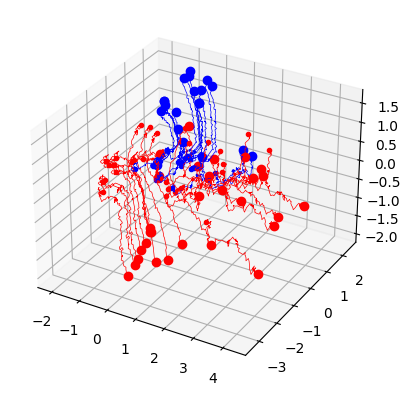}
        & \includegraphics[width=4.1cm,valign=m]{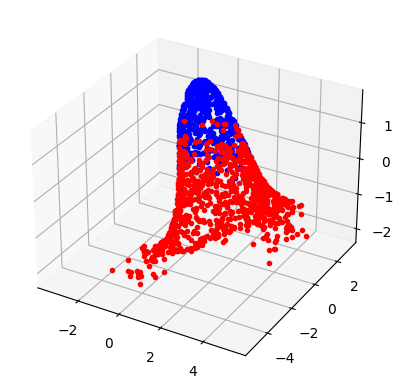}
        &\includegraphics[trim={0 0 0 1.255cm}, clip, width=4.1cm,valign=m]{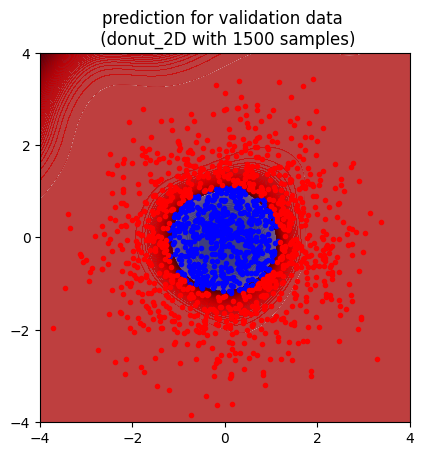}
    \end{tabularx}
    \caption[Classification of \texttt{donut\_2D} with \texttt{RK4Net} based on NODEs or ANODEs]{Classification of \texttt{donut\_2D} with \texttt{RK4Net} of width $\hat{d}=2$ corresponding to the NODE-approach (top) and $\hat{d}=3$, i.\,e.\ with space augmentation characterizing the ANODE-approach (bottom), and of same depth  $L=100$ and $\tanh$ activation. The plots show (from left to right) the trajectories of the features starting at the small dot and terminating at the large dot, their final transformation in the output layer and the resulting prediction with coloured background according to the network's classification.}
    \label{fig:width_donut_2D}
\end{figure}

\begin{figure}
    \centering
    \setlength\tabcolsep{0pt}
    \settowidth\rotheadsize{ANODE}
    \begin{tabularx}{\linewidth}{l ccc}
        & {\small trajectories} & {\small transformation} & {\small prediction}\\ 
        \rothead{\centering NODE}       
        & \includegraphics[width=4.1cm,valign=m]{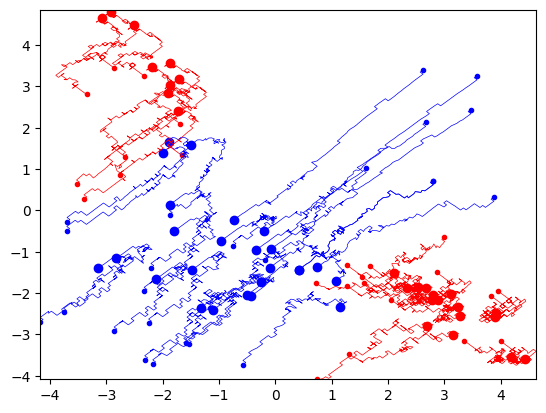}
        & \includegraphics[width=4.1cm,valign=m]{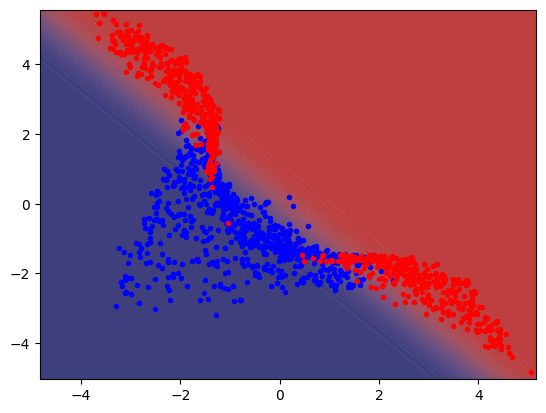}
        &\includegraphics[trim={0 0 0 1.255cm}, clip, width=4.1cm,valign=m]{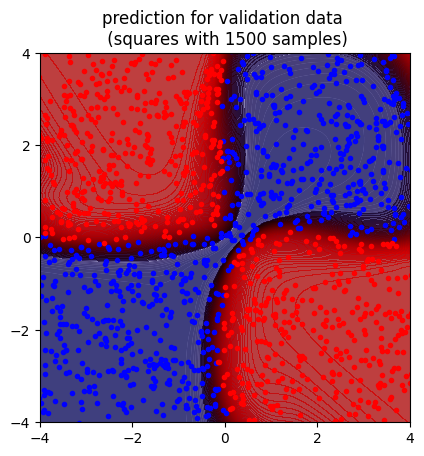}\\ 
        \addlinespace[2pt]
        \rothead{\centering ANODE} 
        & \includegraphics[width=4.1cm,valign=m]{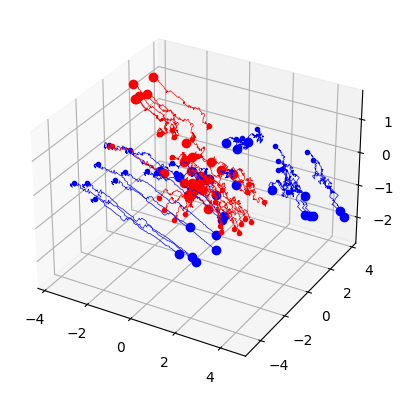}
        & \includegraphics[width=4.1cm,valign=m]{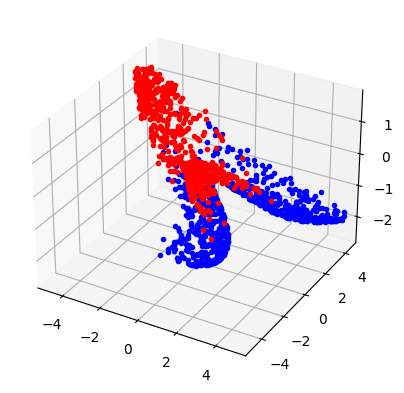}
        &\includegraphics[trim={0 0 0 1.255cm}, clip, width=4.1cm,valign=m]{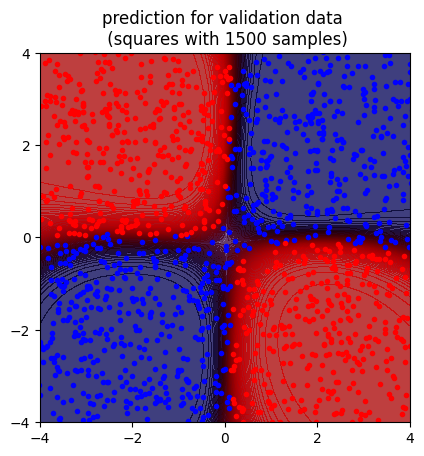}
    \end{tabularx}
    \caption[Classification of \texttt{squares} with \texttt{RK4Net} based on NODEs or ANODEs]{Classification of \texttt{squares} with \texttt{RK4Net} of width $\hat{d}=2$ corresponding to the NODE-approach (top) and $\hat{d}=3$, i.\,e.\ with space augmentation characterizing the ANODE-approach (bottom), and of same depth  $L=100$ and $\tanh$ activation. The plots show (from left to right) the trajectories of the features starting at the small dot and terminating at the large dot, their final transformation in the output layer and the resulting prediction with coloured background according to the network's classification.}
    \label{fig:width_squares}
\end{figure}

Indeed, this becomes evident when testing \verb|RK4Net| models of width $\hat{d}=2$ on the datasets \verb|donut_2D| and \verb|squares|. Due to their specific topological properties, the features belonging to data points of one class need to be squeezed through a gap between the features corresponding to points of the other class in order to be separated by a line. In the case of \verb|donut_2D|, the blue points are pushed out of the center through a narrow path in the ring as depicted in the top row of Figure~\ref{fig:width_donut_2D}. For \verb|squares|, the blue points in the upper right corner are, for instance, channeled through the center where the four squares meet illustrated in the top row of Figure~\ref{fig:width_squares}. This results in rather complicated trajectories which are computational costly and difficult to learn. Furthermore, the prediction accuracy on these sets is always limited since the prediction boundary takes an unfavourable shape as illustrated in the top right plots of Figure \ref{fig:width_donut_2D} and Figure \ref{fig:width_squares}. 

If we augment the original two dimensional space by just one dimension, that is $\hat{d}=3$, the performance on both of these problematic datasets improves significantly. First of all, the trajectories of the features in 3D become much simpler because they can be moved in opposing directions along the added dimension. The bottom trajectories and transformation plots of Figure \ref{fig:width_donut_2D} show how the blue points in the center of \verb|donut_2D| are pushed upwards forming the top of a cone. Similarly, the bottom trajectories and transformation plots of Figure \ref{fig:width_squares} illustrate how the blue points of \verb|squares| are pushed downwards while the red ones are moved upwards so that the squares in which the data points are located get deformed. As a result, the features become effortlessly separable by a hyperplane. Besides that, the prediction can reach a higher accuracy with a separating line that coincides with the intuitive borderline between the points of the two classes as in the bottom right plots in Figure \ref{fig:width_donut_2D} and Figure \ref{fig:width_squares}. 
\if{
\begin{figure}
\subfigure[Bild 1]{\includegraphics[width=0.25\textwidth]{Figures/donut_2D_NODE_traj.png}}
\subfigure[Bild 2]{\includegraphics[width=0.25\textwidth]{Figures/donut_2D_NODE_trans.png}}
\subfigure[Bild 3]{
\includegraphics[width=0.25\textwidth]{Figures/donut_2D_NODE_pred.png}}
\caption[Classification of \texttt{donut\_2D} with \texttt{RK4Net} based on NODEs or ANODEs]{Classification of \texttt{donut\_2D} with \texttt{RK4Net} of width $\hat{d}=2$ corresponding to the NODE-approach (top) and $\hat{d}=3$, i.\,e.\ with space augmentation characterizing the ANODE-approach (bottom), and of same depth  $L=100$ and $\tanh$ activation. The plots show (from left to right) the trajectories of the features starting at the small dot and terminating at the large dot, their final transformation in the output layer and the resulting prediction with coloured background according to the network's classification.}
    \label{fig:width_donut_2D}
\end{figure}
}\fi

This approach of space augmentation leading to ANODEs was proposed in \cite{dupont2019ANODE}. Since augmenting the feature space  generally yields better results across different datasets and initializations, we will always follow that approach in the following experiments by ensuring that $\hat{d}>d$, i.\,e.\ by choosing the width larger than the dimension of the input space.


\subsection{Experiments on network depth}

\label{sec:depth}

Theoretically, increasing the depth of a network architecture improves its expressivity and therefore its performance significantly, see  \cite[pp.198--200]{goodfellow2016DL}. At the same time, it might be harder to train deeper network models and it is questionable whether the optimization algorithm is able to find suitable parameter values at all. Indeed, the experiments in \cite{he2015DResLearningforImage} show that a degradation process can occur: With the depth increasing, both the training and the validation accuracy initially rises and then declines rapidly. Note that this sudden drop is not caused by overfitting because then the training accuracy would stay high. He et al. \cite{he2015DResLearningforImage} observed this phenomenon in feed-forward neural networks without residual connections. However, when such skip connections were added, the network model remained unaffected by this degradation problem. Moreover, the so constructed ResNets could even benefit from greater depth by gaining accuracy. 
 
The question arises whether the same effect occurs in our setting and, particularly, whether RK Nets are affected by this degradation of accuracy. In order to answer that, we will perform an experiment on increasing the depth of the baseline network \verb|StandardNet| and of \verb|RK4Net| as a representative of RK Nets. For that, we select the two dimensional dataset \verb|spiral| with two classes depicted in the right plot of Figure \ref{fig:data_examples}. We train the network models in a generously augmented space of dimensionality $\hat{d}=16$ using the $\tanh$ activation function. The depth $L$ of both networks is then gradually increased from only one layer to maximal 100 layers. Furthermore, each experimental configuration is run four times, so that the evaluation of the results can be based on multiple initializations. Table \ref{tab:depth_accuracy} shows the mean of the accuracy over these repetitions. 

\begin{table}
\centering
\begin{tabular}{ |c|c|c|c|c|c|c|c| } 
 \hline
 depth $L$ & 1 & 3 & 5 & 10 & 20 & 40 & 100\\
 \hline
 \hline
 \multirow{2}{*}{\texttt{StandardNet}} 
 & 92.73 & 92.87 & 98.12 & 97.52 & 67.62 & 51.08 & 50.67\\
 & 91.88 & 92.50 & 98.10 & 97.45 & 66.87 & 48.92 & 49.33\\
 \hline
 \multirow{2}{*}{\texttt{RK4Net}}
 & 75.60 & 91.42 & 97.90 & 99.77 & 99.93 & 99.73 & 99.95\\
 & 75.12 & 90.68 & 97.33 & 99.47 & 99.70 & 99.50 & 99.75\\
 \hline
\end{tabular}
\caption[Mean of training and validation accuracy on \texttt{spiral}]{Mean of training (upper row) and validation (lower row) accuracy (\%) over four repetitions on \texttt{spiral} with network width $\hat{d}=16$ and $\tanh$ activation.}
\label{tab:depth_accuracy}
\end{table}

\begin{table}
\centering
\begin{tabular}{ |c|c|c|c|c|c|c|c| } 
 \hline
 depth $L$ & 1 & 3 & 5 & 10 & 20 & 40 & 100\\
 \hline
 \hline
 \multirow{2}{*}{\texttt{StandardNet}} 
 & 2.23 & 1.38 & 0.66 & 0.77 & 6.09 & 6.93 & 6.93\\
 & 2.33 & 1.53 & 0.67 & 0.77 & 6.13 & 6.94 & 6.93\\
 \hline
 \multirow{2}{*}{\texttt{RK4Net}}
 & 4.32 & 2.68 & 0.98 & 0.16 & 0.04 & 0.10 & 0.01\\
 & 4.39 & 2.69 & 1.06 & 0.28 & 0.13 & 0.12 & 0.12\\
 \hline
\end{tabular}
\caption[Mean of training and validation cost on \texttt{spiral}]{Mean of training (upper row) and validation (lower row) cost ($\times 10^{-1}$) over four repetitions on \texttt{spiral} with network width $\hat{d}=16$ and $\tanh$ activation.}
\label{tab:depth_cost}
\end{table}

Unsurprisingly, the training accuracy for both networks is consistently higher than the validation accuracy, but since this gap is negligible, no overfitting takes place, even without using a regularizer. Nevertheless, the prediction accuracy  of each network architecture greatly varies across different depths. Clearly, we can confirm that the degradation process happens in the simple feed-forward \verb|StandardNet|. Although we observe an improvement when increasing the depth from one to five layers, the accuracy starts declining with a depth of 10 and drops down significantly when reaching 20 layers. Eventually, deep \verb|StandardNet| models with 40 or more layers are not performing better than random class assignment. Interestingly, this degradation problem does not occur in the RK Net \verb|RK4Net|. On the contrary, the accuracy gradually rises with depth. This increase is particularly strong in the shallow models from one to about 10 layers. Then, the training as well as the validation accuracy is already above 99\,\% and is not changing much when deepening the models up to 100 layers. Comparing both network architectures, we see that a shallow \verb|StandardNet| outperforms a shallow \verb|RK4Net|, but when considering deep architectures, the \verb|RK4Net| achieves overall the best classification results. If we repeat this experiment with networks augmented by only one additional dimension, i.\,e.\ with $\hat{d}=3$, the accuracy of the \verb|StandardNet| never exceeds 80\,\% and degrades as before, though starting at a slightly higher number of layers. In contrast, \verb|RK4Net| still manages to classify nearly perfectly but requires 40 or more layers.

The same conclusion can be drawn from the values of the cost function displayed in Table \ref{tab:depth_cost}. As expected, the cost on the validation data is slightly higher than on the training data. Furthermore, the development of the cost with respect to the depth is exactly reversed to the development of the accuracy. These observations accord with the previous results since a high accuracy is linked to a low cost via the used loss function and vice versa. 

\begin{figure}
    \centering
    \setlength\tabcolsep{0pt}
    \settowidth\rotheadsize{\texttt{StandardNet}}
    \begin{tabularx}{\linewidth}{l cccccc}
        & {\small input} &  &  &  &  & {\small output}\\ 
        \rothead{\centering \texttt{StandardNet}}       
        & \includegraphics[trim={0.5cm 0 0.5cm 0}, clip, width=2.1cm,valign=m]{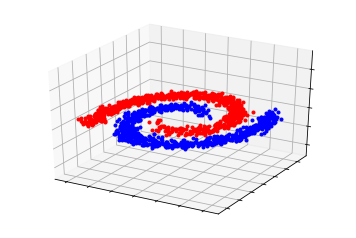}
        & \includegraphics[trim={0.5cm 0 0.5cm 0}, clip, width=2.1cm,valign=m]{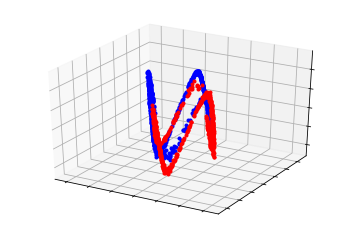}
        &\includegraphics[trim={0.5cm 0 0.5cm 0}, clip, width=2.1cm,valign=m]{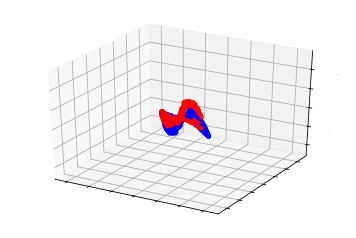}
        & \includegraphics[trim={0.5cm 0 0.5cm 0}, clip, width=2.1cm,valign=m]{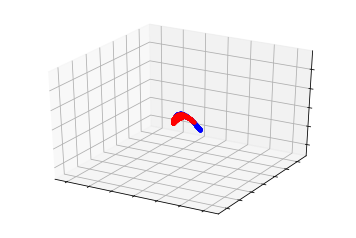}
        &\includegraphics[trim={0.5cm 0 0.5cm 0}, clip, width=2.1cm,valign=m]{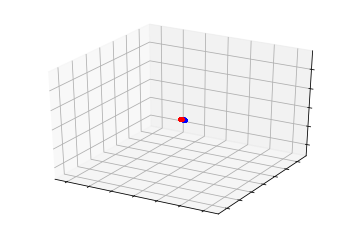}
        &\includegraphics[trim={0.5cm 0 0.5cm 0}, clip, width=2.1cm,valign=m]{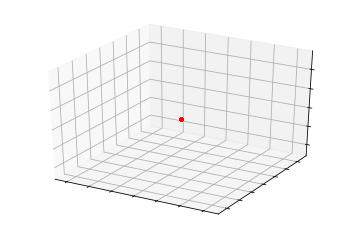}\\ 
        \rothead{\centering \texttt{RK4Net}} 
        & \includegraphics[trim={0.5cm 0 0.5cm 0}, clip, width=2.1cm,valign=m]{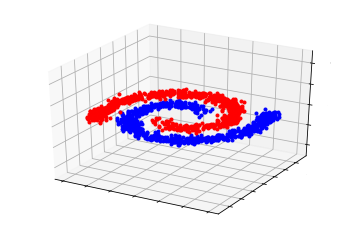}
        & \includegraphics[trim={0.5cm 0 0.5cm 0}, clip, width=2.1cm,valign=m]{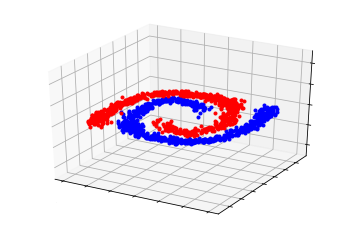}
        &\includegraphics[trim={0.5cm 0 0.5cm 0}, clip, width=2.1cm,valign=m]{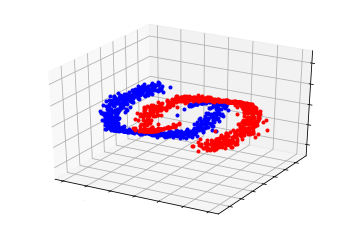}
        & \includegraphics[trim={0.5cm 0 0.5cm 0}, clip, width=2.1cm,valign=m]{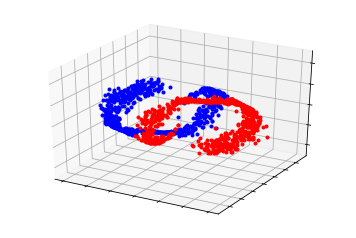}
        &\includegraphics[trim={0.5cm 0 0.5cm 0}, clip, width=2.1cm,valign=m]{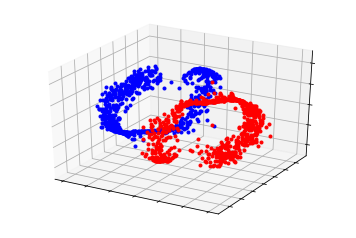}
        &\includegraphics[trim={0.5cm 0 0.5cm 0}, clip, width=2.1cm,valign=m]{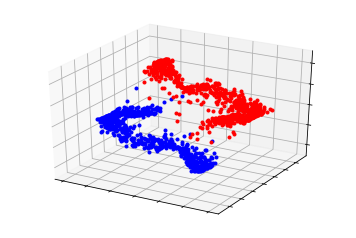}
    \end{tabularx}
    \caption[Feature transformation of \texttt{spiral} with \texttt{StandardNet} and \texttt{RK4Net}]{Feature transformation of \texttt{spiral} with \texttt{StandardNet} (top) and \texttt{RK4Net} (bottom) of width $\hat{d}=16$, depth $L=20$ and $\tanh$ activation. (From left to right) features in input layer, hidden layers and output layer.}
    \label{fig:depth_transformation}
\end{figure}

To better understand this behaviour, we analyze the transformation of the features when passed through the neural network. In the following, we consider a network depth of 20 layers since this is essentially the turning point where the \verb|StandardNet| breaks down while the \verb|RK4Net| reaches its excellent performance. In order to reduce the dimensionality of the feature space to 3D, PCA is employed before visualization. Then, a sequence of transformation plots can be created for each network architecture as depicted in Figure \ref{fig:depth_transformation}. The features are displayed in the input layer, in select hidden layers and finally in the output layer, so that their evolution becomes visible. Starting at the original spiral lifted into the augmented space, the features within the \verb|StandardNet| model initially extend into the direction of the added dimensions, but are then rapidly compressed to the zero vector of the feature space. As the features vanish, it is impossible to distinguish dots belonging to different classes. That explains the low prediction quality of this network. The reason for the contraction during the transformation in deeper layers could be the repeated multiplication by small weights. Besides that, the contracting effect of the activation function might play a significant role. This speculation can be confirmed when undertaking the same experiment with the logistic function instead of the hyperbolic tangent function which causes the feature vectors to shrink at an even earlier layer. When residual connections are present, allowing the data to jump over layers, this kind of information loss can be prevented. For that reason, the \verb|RK4Net| model does not exhibit such an unfavorable feature transformation. The blue and red dots of the spiral are progressively pulled apart, such that a hyperplane can easily separate the feature vectors of the two classes in the output layer. This indicates that the degradation problem is well addressed by RK Nets. In addition, we manage to obtain an even clearer feature separation by increasing the depth.


\subsection{Experiments on network activation}

\label{sec:activation}

Last but not least, we consider how different activation functions affect the performance of RK Nets. The respective experiments are done with the \verb|RK4Net| on the \verb|donut_1D| dataset depicted in the left plot of Figure \ref{fig:data_examples}. 

First of all, we observe that the shape of the prediction boundary does not depend on the choice of the activation function since all successful classifications look very similar to the one illustrated in Figure \ref{fig:act_pred}. Secondly, we find that all available activations lead to the same prediction accuracy of roughly 90\,\% without overfitting, given an augmented and sufficiently deep network architecture. However, the hyperbolic tangent needs less layers than the logistic function to classify correctly. Besides that, $\tanh$  converges significantly faster than $\logistic$ since it has a stronger gradient. As expected, $\ReLU$ and the smoothed $\softplus$ function yield very similar results. Both exhibit a good prediction accuracy even for fairly shallow models and converge after only five epochs. These convergence rates can be deduced from Figure \ref{fig:act_conv}. When repeating this experiment on well-performing shallow \verb|StandardNet| models, we detect the same behaviour with respect to convergence. In conclusion, this influence of the activation functions on the training progress is not unique to RK Nets, but seems to generalize to all kinds of network architectures.

\begin{figure}
    \centering
    \includegraphics[trim={0 0 0 1.255cm}, clip, width=4.5cm]{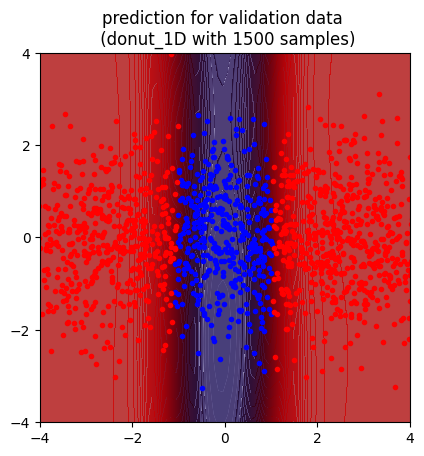}
    \caption[Prediction of \texttt{donut\_1D} with \texttt{RK4Net}]{Prediction of \texttt{donut\_1D} with \texttt{RK4Net} of width $\hat{d}=16$, depth $L=20$ and $\tanh$ activation.}
    \label{fig:act_pred}
\end{figure}

\begin{figure}
    \centering
    \includegraphics[width=6.2cm]{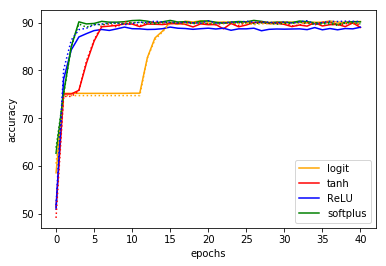}
    \includegraphics[width=6.2cm]{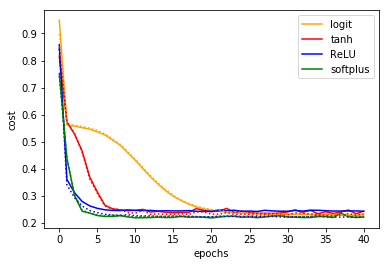}
    \caption[Accuracy and cost on \texttt{donut\_1D} with \texttt{RK4Net}]{Accuracy (left) and cost (right) over the course of epochs on \texttt{donut\_1D} with \texttt{RK4Net} of width $\hat{d}=16$ and depth $L=20$. Solid lines represent metrics on validation and dotted lines on training data.}
    \label{fig:act_conv}
\end{figure}

\subsection{Comparison of Runge-Kutta Nets to standard network architecture}

After identifying good settings for the main hyperparameters separately for each network architechture, we seek to investigate how RK Nets differ from feed-forward networks. In particular, we are interested in the fact whether RK Nets are superior to these conventional network architectures in practice. Therefore, we undertake an experiment in which we compare the performance of \verb|EulerNet| and \verb|RK4Net| to that of \verb|StandardNet| on the more complicated point datasets. More precisely, we classify several \verb|donut_multiclass| and \verb|squares_multiclass| sets whose dimensionality and number of classes can be varied. Thus, we are able to examine the effect of shifting from binary to multiclass classification by adding another class, and of increasing the input dimension from 2D to 3D. All of the tested datasets are depicted in Figure \ref{fig:comparison_datasets}. 

\begin{figure}
    \centering
    \begin{tabular}{ ccc }
    {\small donut 3D \& 2C} & {\small donut 2D \& 6C} & {\small squares 2D \& 4C}\\
    \includegraphics[trim={0 0 0 1.255cm}, clip, width=4cm]{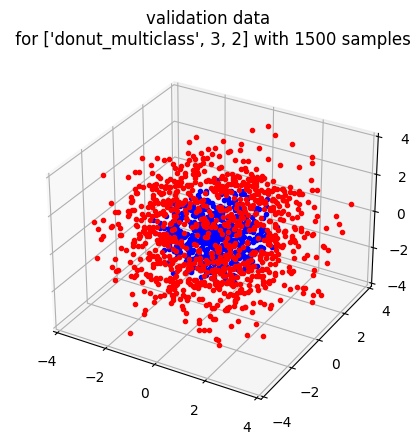}
    & \includegraphics[trim={0 0 0 1.255cm}, clip, width=4cm]{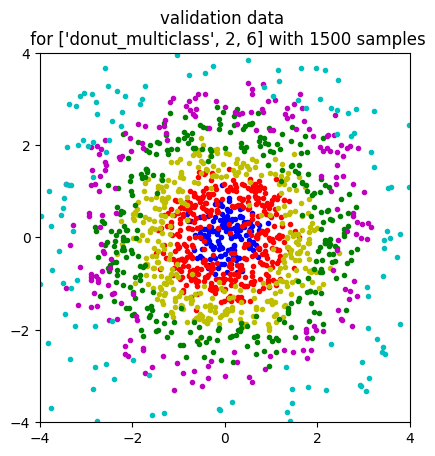}
    & \includegraphics[trim={0 0 0 1.255cm}, clip, width=4cm]{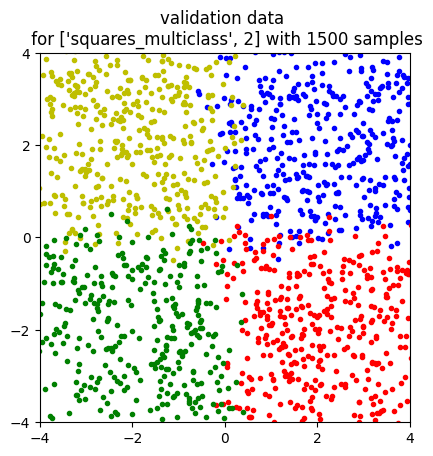}\\
    \\[0.5ex]
    {\small donut 3D \& 3C} & {\small donut 3D \& 6C} & {\small squares 3D \& 4C}\\
    \includegraphics[trim={0 0 0 1.255cm}, clip, width=4cm]{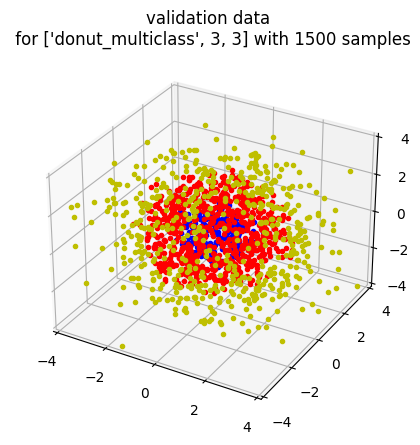}
    & \includegraphics[trim={0 0 0 1.255cm}, clip, width=4cm]{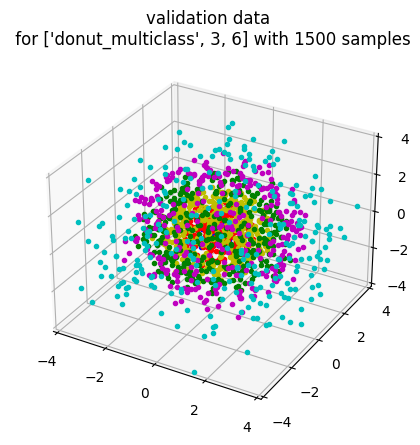}
    & \includegraphics[trim={0 0 0 1.255cm}, clip, width=4cm]{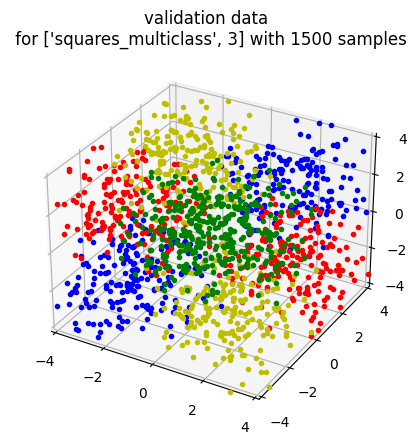}
    \end{tabular}
    \caption[Donut and squares datasets of different dimensionality and number of classes]{Donut and squares datasets of different dimensionality and with varying number of classes used for comparing performance of networks between binary and multiclass classification (first column), as well as 2D and 3D input space (second and third column).}
    \label{fig:comparison_datasets}
\end{figure}

As we discovered in Section \ref{sec:width}, an augmented feature space is generally advantageous, so we choose a relatively large width of $\hat{d}=16$. Since RK Nets benefit from a great number of layers as found in Section \ref{sec:depth}, we set the depth to $L=100$ for both \verb|EulerNet| and \verb|RK4Net|. To establish a fair comparison to the simple feed-forward network whose performance degrades with increasing depth, we decide on a shallow \verb|StandardNet| with $L=5$. Furthermore, we use the hyperbolic tangent activation function for all network architectures.

Before comparing the selected network architectures with each other, we test the sensitivity of the numerical results to initializations within one model. More precisely, we seek to analyze to what extent the behaviour of a neural network depends on the random sampling of the data points and the initialization of the trainable network parameters, i.\,e.\ the weights and biases. For conciseness, we only show the results for \verb|RK4Net| on the two dimensional donut dataset with six classes, but remark that the observations are similar across all network architectures and datasets. As a measure for the variability of the results, we consider the standard deviation of the accuracy and cost over four experiment runs. As displayed in Table \ref{tab:sensitivity}, the variability is negligible both on the training and the validation data. Therefore, the network performance seems to be robust to random initializations. This finding is also reflected in the visualization of the transformed data and the prediction of each repetition shown in Figure \ref{fig:sensitivity}. Indeed, the data points are transformed into a very similar shape and the prediction boundaries between the classes are nearly the same. Consequently, it is justified to base the following evaluation of the different networks on averages over four initializations.

\begin{table}
\centering
\begin{tabular}{ |c||c|c||c|c| } 
\hline
& training & validation & training & validation\\
& accuracy & accuracy & cost & cost\\
\hline
\hline
mean & 77.13 & 74.92 & 5.13 & 5.59\\
\hline
standard deviation & 0.76 & 0.89 & 0.08 & 0.16\\
\hline
\end{tabular}
\caption[Sensitivity]{Variability of accuracy (\%) and cost ($\times 10^{-1}$) over four repetitions for \texttt{RK4Net} with width $\hat{d}=16$, depth $L=100$ and $\tanh$ activation, on donut 2D \& 6C.}
\label{tab:sensitivity}
\end{table}

\begin{figure}
    \centering
    \setlength\tabcolsep{0pt}
    \settowidth\rotheadsize{transformation (3D)}
    \begin{tabularx}{\linewidth}{l cccc}
        & {\small repetition 1} & {\small repetition 2} & {\small repetition 3} & {\small repetition 4}\\ 
        \rothead{\centering transformation (3D)}       
        & \includegraphics[width=3.1cm,valign=m]{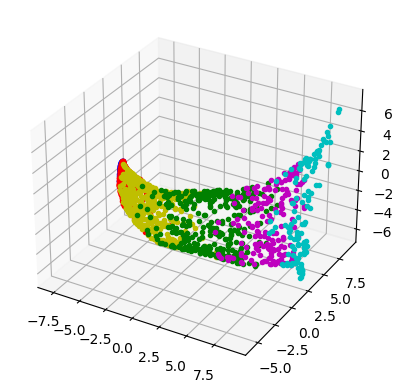}
        & \includegraphics[width=3.1cm,valign=m]{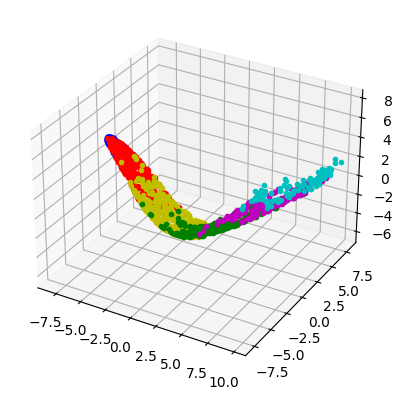}
        & \includegraphics[width=3.1cm,valign=m]{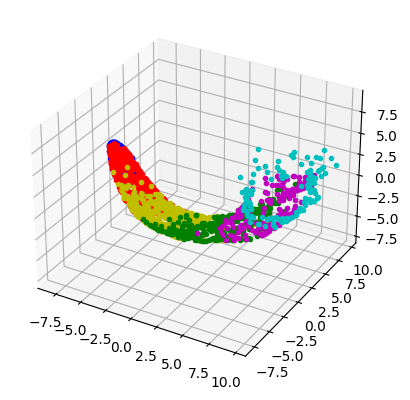}
        & \includegraphics[width=3.1cm,valign=m]{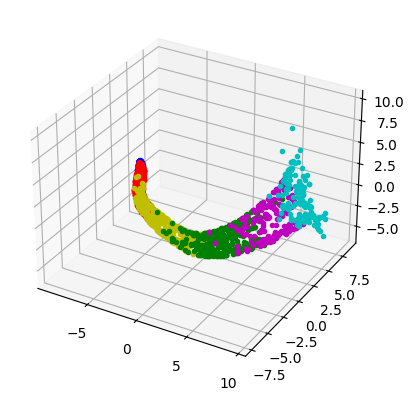}\\ 
        \addlinespace[2pt]
        \rothead{\centering prediction} 
        &\includegraphics[trim={0 0 0 1.255cm}, clip, width=3.1cm,valign=m]{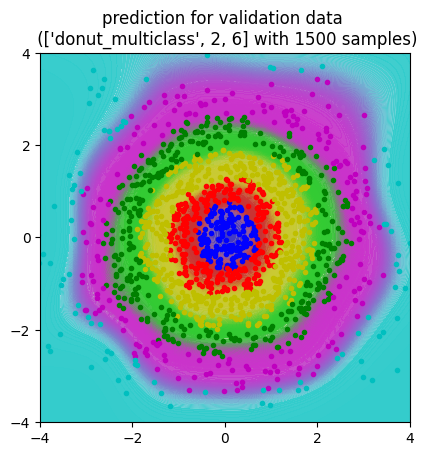}
        &\includegraphics[trim={0 0 0 1.255cm}, clip, width=3.1cm,valign=m]{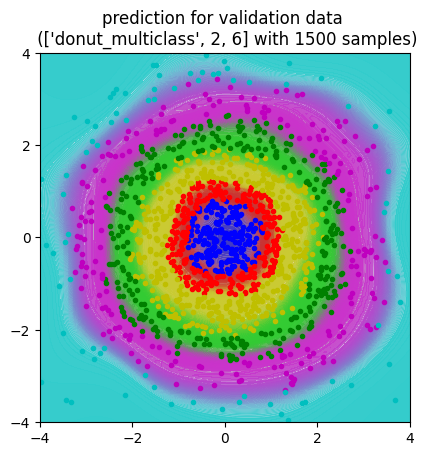}
        &\includegraphics[trim={0 0 0 1.255cm}, clip, width=3.1cm,valign=m]{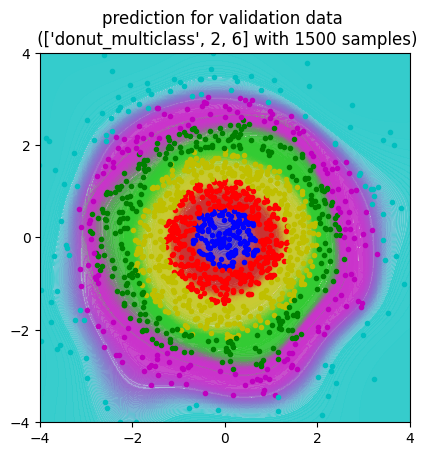}
        &\includegraphics[trim={0 0 0 1.255cm}, clip, width=3.1cm,valign=m]{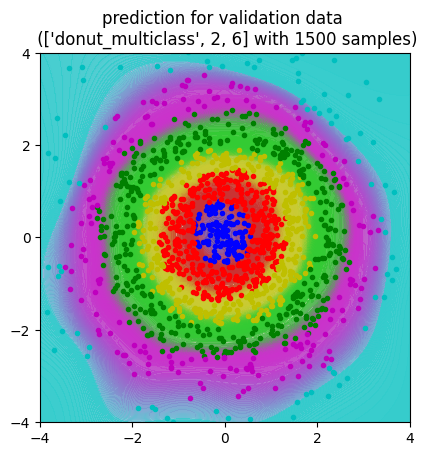}
    \end{tabularx}
    \caption[Repetitions with random initializations]{Repetitions with random initializations for \texttt{RK4Net} with width $\hat{d}=16$, depth $L=100$ and $\tanh$ activation, on donut 2D \& 6C. The plots show (upper row) the feature transformation in the output layer reduced by PCA to 3D, and (lower row) the resulting prediction underlaid with a coloured background according to the network's classification.}
    \label{fig:sensitivity}
\end{figure}

We start by comparing the validation metrics, namely accuracy and cost, of models built according to different network architectures and trained on various donut and squares sets. As discussed above, we run each experiment configuration four times to make sure that the results do not depend on randomness. The mean of both metrics over these repetitions are displayed in Table \ref{tab:comparison_metrics}. 

First of all, we note that networks of each architecture manage to classify all validation datasets predominantly correctly. Naturally, this is a consequence of choosing the hyperparameters very carefully. However, the achieved prediction accuracy never exceeds 95\,\% for all datasets. The reason for this is that the datasets include noise, meaning that the boundary between points of distinct classes are rather fuzzy, so that a point of a specific class occasionally falls into the region of the adjacent class. This noise was deliberately added in order to test the robustness of the neural networks. Under these circumstances, the performance of all network architectures are definitely satisfying. Moreover, we see that \verb|StandardNet| does equally well as both RK Nets, \verb|EulerNet| and \verb|RK4Net|. The small differences between the prediction quality of the tested network architectures originate solely from the variance in between the experiment repetitions caused by random initialization which we confirmed by checking the standard deviation over these repeated measurements.

Furthermore, we find that a dataset with one additional class or with one additional dimension is generally a little harder to classify. This becomes evident when comparing the metrics of the donuts in 3D with either two or three classes, as well as of the two and three dimensional donuts of each 6 classes or the two and three dimensional squares of each 4 classes with one another: The accuracy of the respective latter set is consistently smaller coupled with a slightly higher value of the cost. This observation coincides with our expectation of how to rate the difficulty of a classification task.

\begin{table}
\centering
\begin{tabular}{ |c||c|c||c|c||c|c| } 
\hline
& donut & donut & donut & donut & squares & squares\\
& 3D \& 2C & 3D \& 3C & 2D \& 6C & 3D \& 6C & 2D \& 4C & 3D \& 4C\\
\hline
\hline
\multirow{2}{*}{\texttt{StandardNet}} 
& 92.37  & 87.75 & 75.12 & 73.00 & 94.12 & 89.68\\
& 1.71 & 2.85 & 5.60 & 5.86 & 1.57 & 3.03\\
\hline
\multirow{2}{*}{\texttt{EulerNet}} 
& 91.88 & 88.30 & 74.87 & 74.63 & 93.35 & 89.48\\
& 1.84 & 2.75 & 5.56 & 5.67 & 1.66 & 2.71\\
\hline
\multirow{2}{*}{\texttt{RK4Net}} 
& 92.73 & 87.13 & 74.92 & 74.88 & 93.20 & 89.37\\
& 1.72 & 2.95 & 5.59 & 5.73 & 1.64 & 2.81\\
\hline
\end{tabular}
\caption[Mean of validation accuracy and cost]{Mean of validation accuracy (\%, upper row) and cost ($\times 10^{-1}$, lower row) over four repetitions with network width $\hat{d}=16$, depth $L=5$ for \texttt{StandardNet} and $L=100$ for \texttt{EulerNet} and \texttt{RK4Net}, and $\tanh$ activation.}
\label{tab:comparison_metrics}
\end{table}

In order to develop a better understanding of how the network models reach their prediction, we consider the feature transformation within the network layers, particularly in the output layer. Since the networks are equipped with a 16 dimensional feature space, we firstly need to reduce the dimensionalty to 2D or 3D, so that we can visualize the feature vectors. As before, this is realized by PCA. Comparing these transformation plots across several repetitions, we find that the shapes and patterns are very similar. Moreover, the transformations of all donut sets with a particular network architecture have key characteristics in common, regardless of the donut's dimensionality and number of classes. For instance, the \verb|StandardNet| model arranges all features in the form of a string which changes color along its length, as illustrated in the upper transformation plots of Figure \ref{fig:comparison_donut_trans}. Then, the points can be classified by dividing this string into sections of the same color. The transformations in \verb|EulerNet| and \verb|RK4Net| depicted in the middle and lower row of Figure~\ref{fig:comparison_donut_trans} strongly resemble each other: The features evolve to a cone with the color transitioning along its height. So, dots of different colors can be separated from each other by slicing the cone. This suggests that all RK Nets share a similar feature transformation determined by the ODE they are derived from. However, the underlying dynamics of standard networks and RK Nets seem to be fundamentally distinct from each other as their dissimilar feature evolution indicates. The same holds true for squares sets of different dimensionality and class numbers. Here, \verb|StandardNet| has the shape of a closed curve, with all dots of one color allocated in one segment, whereas \verb|EulerNet| and \verb|RK4Net| create a wavy surface while pushing the dots to the outer corner of their original square. This is shown in the left and middle column of Figure \ref{fig:comparison_squares_trans}.

Despite their different feature transformations, the predictions of standard feed-forward network and RK Net models look surprisingly similar. For visualization, we choose the two dimensional donut and squares dataset with six and four classes, respectively, since plotting in 2D offers the possibility of adding a coloured background according to the network's classification of the entire input space. When comparing the shape of the devision lines in the prediction plots in the right columns of Figure \ref{fig:comparison_donut_trans} and \ref{fig:comparison_squares_trans}, we see that they are almost indistinguishable. That also explains why models of all network architectures achieve approximately the same accuracy for a particular dataset.

\begin{figure}
    \centering
    \setlength\tabcolsep{0pt}
    \settowidth\rotheadsize{\texttt{StandardNet}}
    \begin{tabularx}{\linewidth}{l ccc}
        & {\small transformation (3D)} & {\small transformation (2D)} & {\small prediction}\\ 
        \rothead{\centering \texttt{StandardNet}}       
        & \includegraphics[width=4.1cm,valign=m]{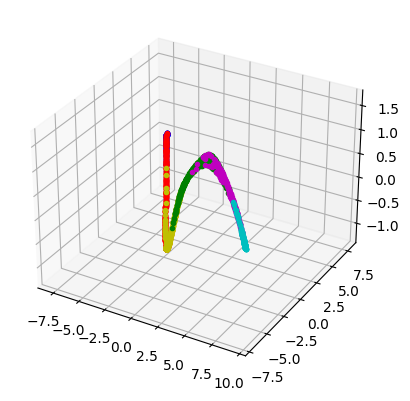}
        & \includegraphics[width=4.1cm,valign=m]{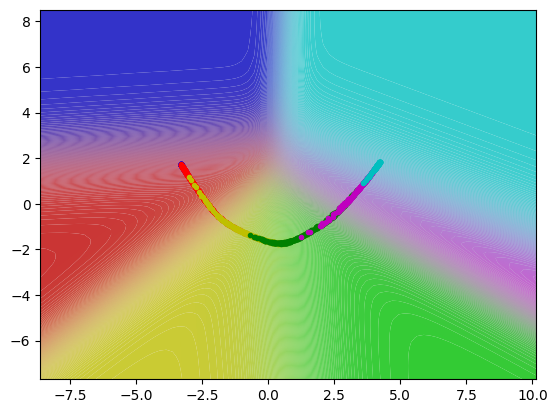}
        &\includegraphics[trim={0 0 0 1.255cm}, clip, width=4.1cm,valign=m]{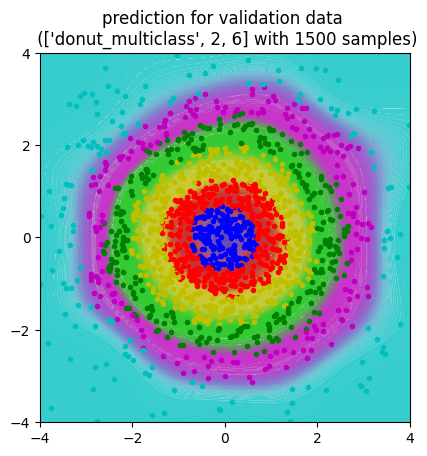}\\ 
        \addlinespace[2pt]
        \rothead{\centering \texttt{EulerNet}} 
        & \includegraphics[width=4.1cm,valign=m]{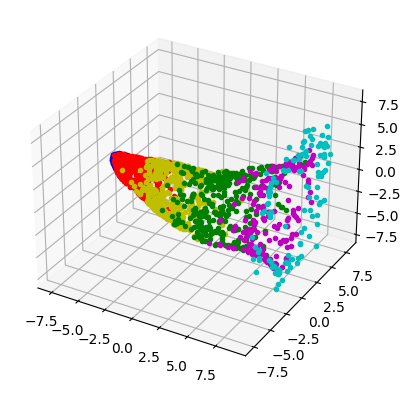}
        & \includegraphics[width=4.1cm,valign=m]{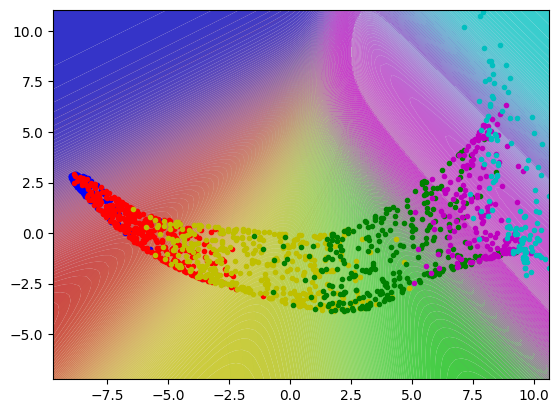}
        &\includegraphics[trim={0 0 0 1.255cm}, clip, width=4.1cm,valign=m]{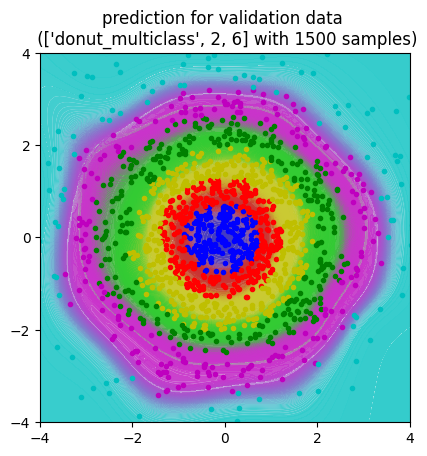}\\
        \addlinespace[2pt]
        \rothead{\centering \texttt{RK4Net}} 
        & \includegraphics[width=4.1cm,valign=m]{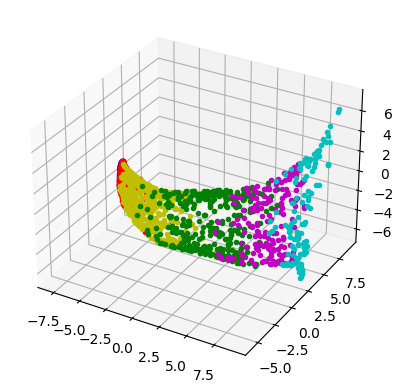}
        & \includegraphics[width=4.1cm,valign=m]{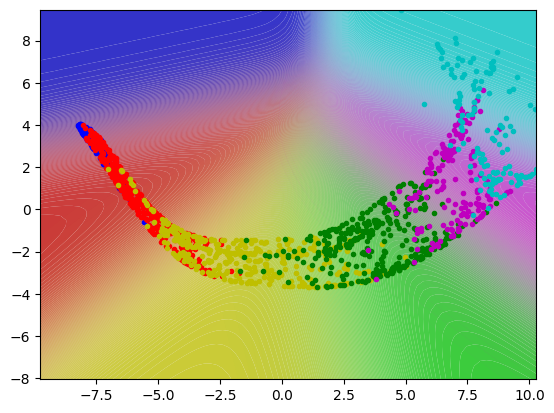}
        &\includegraphics[trim={0 0 0 1.255cm}, clip, width=4.1cm,valign=m]{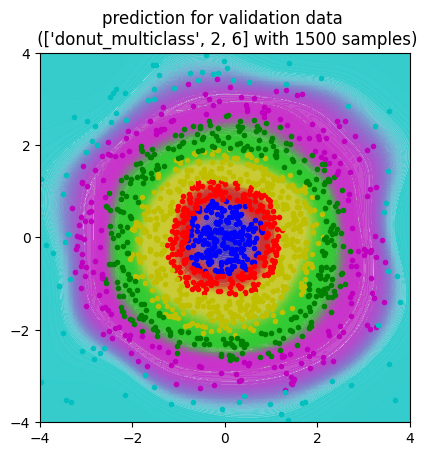}
    \end{tabularx}
    \caption[Classification of donut 2D \& 6C]{Classification of donut 2D \& 6C with network width $\hat{d}=16$, depth $L=5$ for \texttt{StandardNet} and $L=100$ for \texttt{EulerNet} and \texttt{RK4Net}, and $\tanh$ activation. The plots show (from left to right) the feature transformation in the output layer reduced by PCA to 3D and 2D, and the resulting prediction. Two dimensional plots are underlaid with a coloured background according to the network's classification.}
    \label{fig:comparison_donut_trans}
\end{figure}

\begin{figure}
    \centering
    \setlength\tabcolsep{0pt}
    \settowidth\rotheadsize{\texttt{StandardNet}}
    \begin{tabularx}{\linewidth}{l ccc}
        & {\small transformation (3D)} & {\small transformation (2D)} & {\small prediction}\\ 
        \rothead{\centering \texttt{StandardNet}}       
        & \includegraphics[width=4.1cm,valign=m]{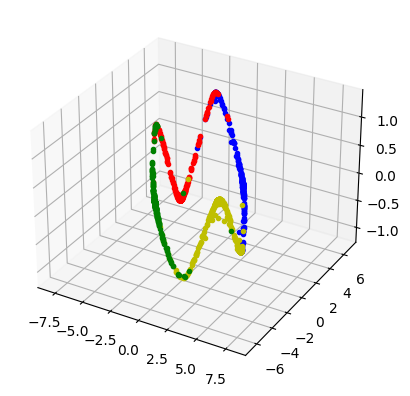}
        & \includegraphics[width=4.1cm,valign=m]{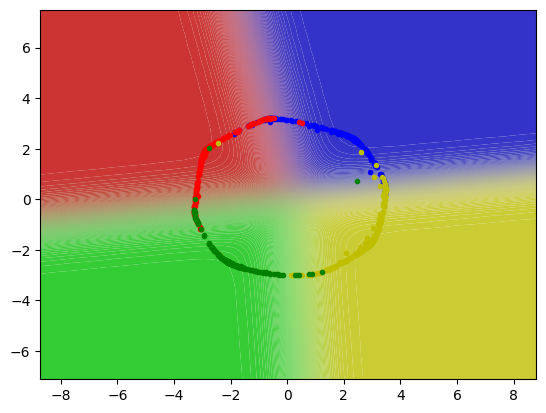}
        &\includegraphics[trim={0 0 0 1.255cm}, clip, width=4.1cm,valign=m]{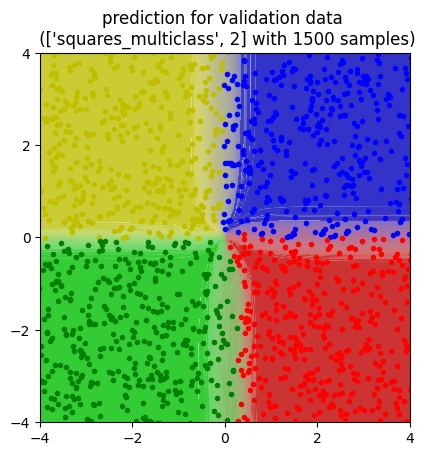}\\
        \addlinespace[2pt]
        \rothead{\centering \texttt{EulerNet}} 
        & \includegraphics[width=4.1cm,valign=m]{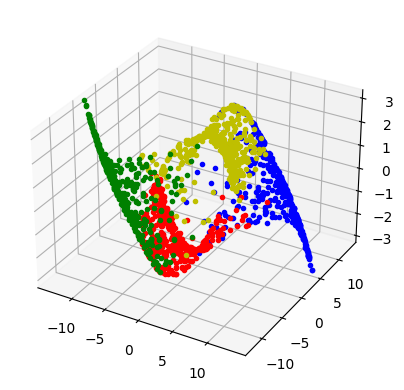}
        & \includegraphics[width=4.1cm,valign=m]{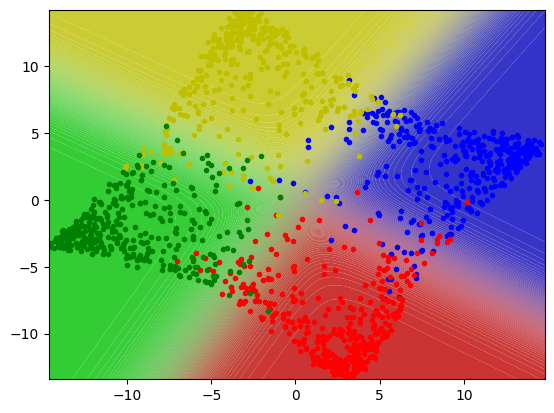}
        &\includegraphics[trim={0 0 0 1.255cm}, clip, width=4.1cm,valign=m]{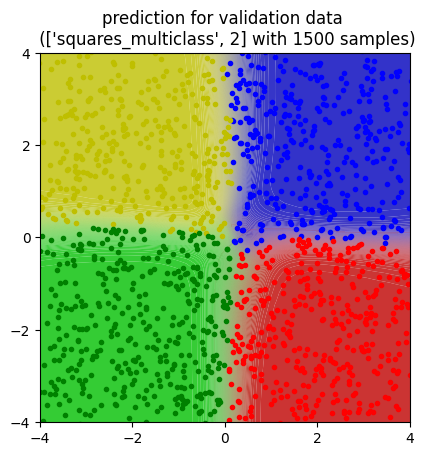}\\
        \addlinespace[2pt]
        \rothead{\centering \texttt{RK4Net}} 
        & \includegraphics[width=4.1cm,valign=m]{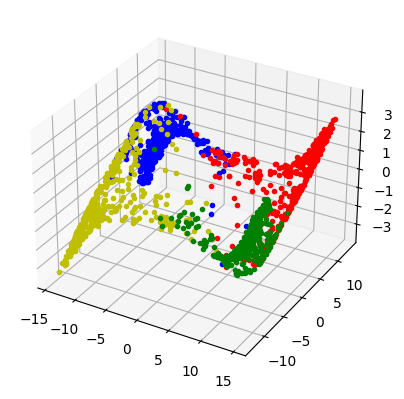}
        & \includegraphics[width=4.1cm,valign=m]{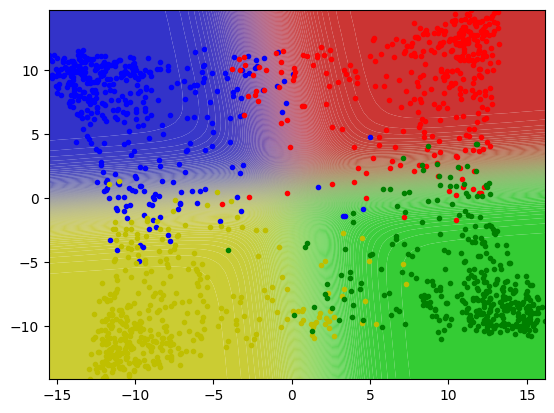}
        &\includegraphics[trim={0 0 0 1.255cm}, clip, width=4.1cm,valign=m]{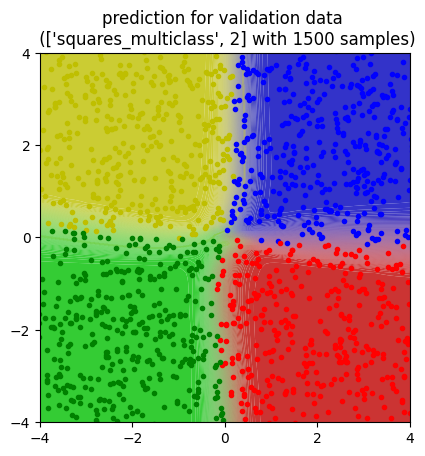}
    \end{tabularx}
    \caption[Classification of squares 2D \& 4C]{Classification of squares 2D \& 4C with network width $\hat{d}=16$, depth $L=5$ for \texttt{StandardNet} and $L=100$ for \texttt{EulerNet} and \texttt{RK4Net}, and $\tanh$ activation. The plots show (from left to right) the feature transformation in the output layer reduced by PCA to 3D and 2D, and the resulting prediction. Two dimensional plots are underlaid with a coloured background according to the network's classification.}
    \label{fig:comparison_squares_trans}
\end{figure}

Finally, the differences between the network architectures with respect to the training process are analyzed. For that, we consider the most challenging datasets among the above selection, which are the three dimensional donut and squares set with four and six classes, respectively. The evolution of the prediction accuracy and the value of the cost function on the validation data are plotted in Figure \ref{fig:comparison_donut_conv} and \ref{fig:comparison_squares_conv}. Here, only the first 14 epochs are displayed, although all network models were trained over up to 40 epochs. Hence, we can focus on the iterations where convergence between network architectures differs. When optimizing on the donut dataset, \verb|EulerNet| is fastest but the other two networks are catching up after around 6 epochs. Ultimately, all networks converge to roughly the same accuracy and cost. For the squares set, we see that \verb|EulerNet| and \verb|RK4Net| have very similar convergence graphs. However, \verb|StandardNet| clearly needs more iterations to reach the prediction accuracy of both RK Nets and the value of its cost remains the highest over the entire training phase. This is not due to the network activation as it was the case in Section \ref{sec:activation} because we use the same function in all models. Besides that, the standard network exhibits slightly more variance across experiment repetitions indicated by the fairly wide shadow around the solid line representing the mean. In conclusion, the training of RK Nets seems to be more robust, especially when taking into account that deep standard networks cannot be optimized at all as shown in Section \ref{sec:depth}. 

\begin{figure}
    \centering
    \includegraphics[width=6.2cm]{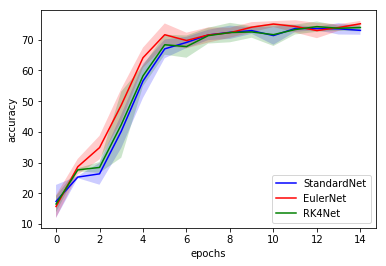}
    \includegraphics[width=6.2cm]{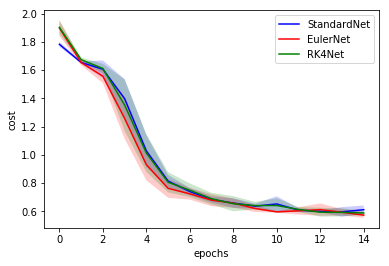}
    \caption[Validation accuracy and cost on donut 3D \& 6C]{Validation accuracy (left) and cost (right) over the course of epochs on donut 3D \& 6C with network width $\hat{d}=16$, depth $L=5$ for \texttt{StandardNet} and $L=100$ for \texttt{EulerNet} and \texttt{RK4Net}, and $\tanh$ activation. Solid line represents the mean and shaded area the standard deviation over repetitions.}
    \label{fig:comparison_donut_conv}
\end{figure}

\begin{figure}
    \centering
    \includegraphics[width=6.2cm]{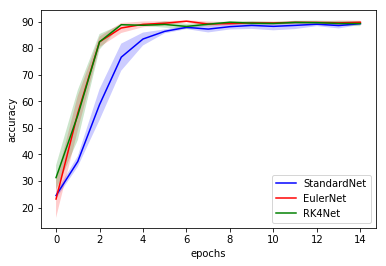}
    \includegraphics[width=6.2cm]{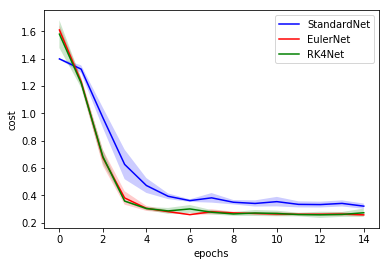}
    \caption[Validation accuracy and cost on squares 3D \& 4C]{Validation accuracy (left) and cost (right) over the course of epochs on squares 3D \& 4C with network width $\hat{d}=16$, depth $L=5$ for \texttt{StandardNet} and $L=100$ for \texttt{EulerNet} and \texttt{RK4Net}, and $\tanh$ activation. Solid line represents the mean and shaded area the standard deviation over repetitions.}
    \label{fig:comparison_squares_conv}
\end{figure}

\subsection{Extension to image classification}

So far, we have only considered point classification tasks. These low-dimensional toy problems are useful for testing new network architectures since they allow for good visualizations which are important for analyzing the behaviour of a network. However, point classification is not an interesting application as such and it remains unclear if the above observations generalize to other more complex datasets. Therefore, we extend our numerical experiments to grayscale images with $28 \times 28$ pixels. Suitable data bases are, for instance, MNIST 
containing handwritten digits and Fashion-MNIST 
which consists of simplified article images, see Figure \ref{fig:MNIST} and \ref{fig:FashionMNIST}. Both of them have ten classes: the labels of the former are given by the digits from zero to nine and the latter is labelled according to the product type such as T-shirt, trouser, pullover or dress.

Typically, image classification is done by using convolution and pooling layers to extract features from the input image before applying a fully-connected network to the flattened feature maps. This approach has the advantage of retaining spatial information since the filters slide over the two-dimensional image and operate only on neighboring pixels contained in the current receptive field. 

We note that there are several ways to transfer the idea of applying Runge-Kutta methods and feature space augmentation to convolutional architectures. In order to use the RK scheme for propagating the image data forward, we only need to make sure that the layer outputs have the same dimension as their inputs. This means that the number of output channels must equal the number of input channels and that the feature map size has to be constant across all layers. While the number of output channels is determined by the number of filters, the height and width of the feature map can be controlled by the filter size, by padding, as well as the stride. Thus, maintaining the original dimension can be achieved by choosing appropriate hyperparameters for convolution. For augmenting the space, we could either increase the spatial size of each image by padding on the borders, resulting in a larger feature map size, or we could add further channels to the input image as proposed in \cite{dupont2019ANODE}. In the case of the considered grayscale pictures, the first option means adding pixels around the $28 \times 28$-sized pixel grid, while the second approach would extend the single channel to multiple channels.

As an alternative to this convolutional network based on Runge-Kutta methods, we will use the same RK Nets as for point classification by reshaping the $28 \times 28$ image as a $28^2$ dimensional vector which is fed directly into the network. Optionally, a few conventional convolution and pooling layers could be added for encoding spatial relations and down-sampling before flattening the data. Such an approach was taken in \cite{chen2018NODE}. However, we omit this step because we are not interested in the effect of convolution. Instead, we solely focus on the previously introduced Runge-Kutta networks with an augmented feature space.

\begin{table}
\centering
\begin{tabular}{ |c||c|c||c|c| } 
 \hline
 \multirow{2}{*}{width $\hat{d}$}
 & training & validation & training & validation\\
 & accuracy & accuracy & cost & cost\\
 \hline
 \hline
 $28^2$& $97.70 \pm 2.80$ & $87.27 \pm 2.93$ & $0.78 \pm 0.95$ & $7.71 \pm 1.62$\\
 \hline
 $30^2$ & $99.77 \pm 0.40$ & $90.40 \pm 1.08$ & $0.10 \pm 0.17$ & $5.36 \pm 0.46$\\
 \hline
\end{tabular}
\caption[]{Mean and standard deviation of accuracy (\%) and cost ($\times 10^{-1}$) over four repetitions for non-augmented (upper row) and augmented (lower row) \texttt{RK4Net} with depth $L=100$ and $\tanh$ activation on MNIST.}
\label{tab:image_width}
\end{table}

For the following experiments the choice of the hyperparameters is based on our findings from point classification. Concerning the width of the network, we can generally observe an improvement when augmenting the space by further dimensions as seen before in the toy examples. To illustrate that, the performance of a non-augmented \verb|RK4Net|, i.\,e.\ $\hat{d}=d=28^2$, on MNIST is compared to its generously augmented version of width $30^2$ in Table \ref{tab:image_width}. Not only is the mean of the accuracy higher and the mean of the cost lower on both the training and the validation set, but also the standard deviation over the four experiment repetitions is significantly smaller in the case of space augmentation. This indicates that the training process is more stable, which could be a result of simpler flows through the network as suggested by the point classification problems above. Besides that, the previous results on network depth also seem to generalize to image classification. While RK Nets are not affected by the degradation process, \verb|StandardNet| suffers heavily from increasing the number of layers up to 100. A deep \verb|StandardNet| does in fact perform not better than random classification since the training and validation accuracy are roughly 10\,\% on image datasets with 10 classes. For that reason, we will choose a \verb|StandardNet| with 5 layers as baseline for evaluating RK Nets of depth 100. Furthermore, we again use the $\tanh$ activation for all architectures.

In order to get a first impression of how RK Nets are behaving with respect to image data, we inspect the predicted label for some example images. Figure \ref{fig:MNIST} and Figure~\ref{fig:FashionMNIST} show a few pictures from the validation datasets of MNIST and Fashion-MNIST along with their true label and the outcome of a deep augmented \verb|RK4Net| trained on the respective data base. Indeed, most images are classified correctly. If we have a closer look at the examples on which the network failed, it is obvious why. For instance, the shape of the digit in the lower right corner of Figure \ref{fig:MNIST} is somewhat ambiguous since the relatively large loop of the six could easily be mistaken as a zero. Similarly, the products shirt and coat, as well as sneaker and sandal share a lot of common features and are therefore hard to distinguish from each other. That explains the wrongly classified items exhibited in Figure \ref{fig:FashionMNIST}. Overall, the quality of the predictions produced by a Runge-Kutta network seems to be fairly good. 

\begin{figure}
    \centering
    \includegraphics[width=8.2cm]{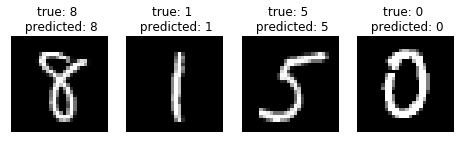}
    \includegraphics[width=8.2cm]{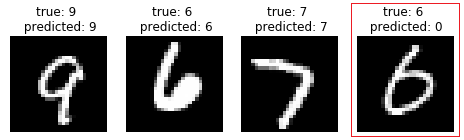}
    \caption[]{Exemplary images of MNIST with true label and prediction produced by \texttt{RK4Net} with width $\hat{d}=30^2$, depth $L=100$ and $\tanh$ activation.}
    \label{fig:MNIST}
\end{figure}

\begin{figure}
    \centering
    \includegraphics[width=8.1cm]{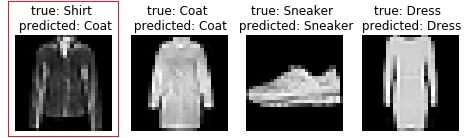}
    \includegraphics[width=8.2cm]{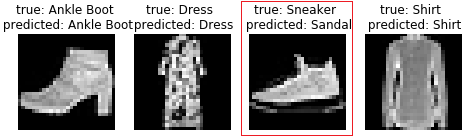}
    \caption[]{Exemplary images of Fashion-MNIST with true label and prediction produced by \texttt{RK4Net} with width $\hat{d}=30^2$, depth $L=100$ and $\tanh$ activation.}
    \label{fig:FashionMNIST}
\end{figure}

We will further investigate the performance of RK Nets by comparing the validation metrics of all network architectures displayed in Table \ref{tab:comparison_metrics_image}. First of all, we find that Fashion-MNIST is generally a more challenging problem than MNIST. This is not very surprising because pictures of articles are more complex and diverse than digits. However, there are also significant differences in the performance of a conventional feed-forward network like \verb|StandardNet| and Runge-Kutta networks such as \verb|EulerNet| and \verb|RK4Net|. The former scores on average about five percentage points less on MNIST and even more than 15 percentage points less on Fashion-MNIST than the latter networks concerning the validation accuracy. As expected, this is coupled with a higher validation cost on both image data bases for \verb|StandardNet|. Moreover, the standard deviation over several experiment runs is comparatively large, particularly for Fashion-MNIST. Consequently, \verb|StandardNet| seems to be less robust to random initialization. Both RK Nets, \verb|EulerNet| and \verb|RK4Net|, provide very similar results and perform clearly better than the baseline. 

\begin{table}
\centering
\begin{tabular}{ |c||c|c|} 
\hline
& MNIST & Fashion-MNIST\\
\hline
\hline
\multirow{2}{*}{\texttt{StandardNet}} 
& $85.67 \pm 0.78$ & $61.23 \pm 6.00$\\
& $8.95 \pm 0.92$ & $11.41 \pm 1.37$\\
\hline
\multirow{2}{*}{\texttt{EulerNet}} 
& $90.98 \pm 0.48$ & $77.62 \pm 2.57$\\
& $5.71 \pm 0.36$ & $9.52 \pm 1.87$\\
\hline
\multirow{2}{*}{\texttt{RK4Net}} 
& $90.40 \pm 1.08$ & $79.13 \pm 1.57$\\
& $5.36 \pm 0.46$ & $8.24 \pm 0.60$\\
\hline
\end{tabular}
\caption[]{Mean and standard deviation of validation accuracy (\%, upper row) and cost ($\times 10^{-1}$, lower row) over four repetitions with network width $\hat{d}=30^2$, depth $L=5$ for \texttt{StandardNet} and $L=100$ for \texttt{EulerNet} and \texttt{RK4Net}, and $\tanh$ activation.}
\label{tab:comparison_metrics_image}
\end{table}

The performance gap between RK Nets and standard network models can be explained by analyzing the transformed features on which the classifier acts. Figure~\ref{fig:image_transformation} depicts the features of the validation data from Fashion-MNIST in the output layer. Since the feature vectors are high dimensional, we reduce their dimension by PCA. Thus, the features of each image can be visualized by a point in a three dimensional space whose color corresponds to its true class. The resulting plot contains point clouds which are ideally clustered by their colors so that points belonging to different classes can be well separated by a classifier. This is indeed the case for the plot on the right-hand side of Figure \ref{fig:image_transformation} showing the features within a \verb|RK4Net| model. However, the features evolving in \verb|StandardNet| plotted on the left-hand side are rather scattered across space with points of the same color being far apart in some cases. Arranging the points in this way makes it hard to classify them correctly.

\begin{figure}
    \centering
    \includegraphics[width=6.2cm]{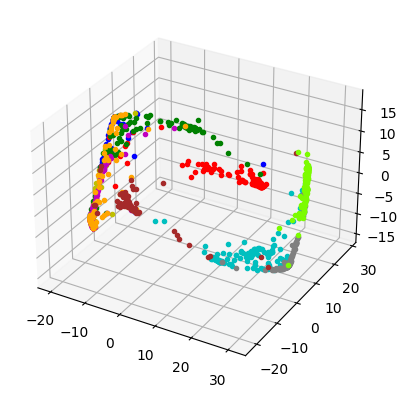}
    \includegraphics[width=6.2cm]{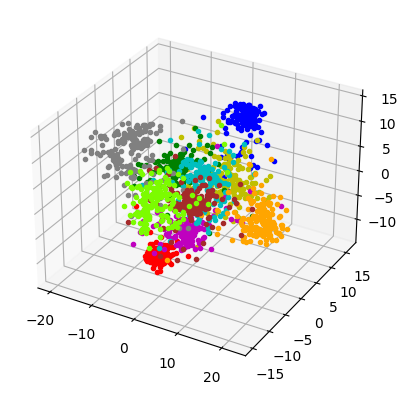}
    \caption[]{Feature transformation in the output layer of \texttt{StandardNet} (left) and \texttt{RK4Net} (right) of Fashion-MNIST images reduced by PCA to 3D. Each color represents one article class.}
    \label{fig:image_transformation}
\end{figure}

Last but not least, we will investigate the convergence of the accuracy and cost during the training phase. For that, the training and validation metrics for all network architectures are plotted over the first 25 epochs in Figure \ref{fig:MNIST_convergence} and Figure \ref{fig:FashionMNIST_convergence} for MNIST and Fashion-MNIST, respectively. Note that the networks were trained even longer for maximum 40 epochs with the possibility of early stopping when the model started to overfit. Firstly, we find that the convergence behaviour of \verb|EulerNet| and \verb|RK4Net| strongly resemble each other. Secondly, both RK Nets clearly outperform \verb|StandardNet|, particularly on the more challenging Fashion-MNIST images. We have already seen before that their final accuracy is higher accompanied with a lower cost, but in addition they also converge faster compared to the baseline model. Furthermore, the convergence graphs of \verb|StandardNet| are relatively erratic, especially towards the end of training on Fashion-MNIST, which indicates instabilities in the optimization process. In contrast, RK Nets seem to be easier to train resulting into a much smoother convergence graph. Since the plots show both the validation and the training values, we can further investigate how the networks generalize to unseen data. Overall, we observe a significant gap between the performance on training and validation data, contrary to the results found for point classification as illustrated in Figure \ref{fig:act_conv}. Since images are more complex than two or three dimensional points, the models struggle to generalize equally well. Using regularization methods could certainly address this problem, but are not considered in the scope of this work.  

\begin{figure}
    \centering
    \includegraphics[width=6.2cm]{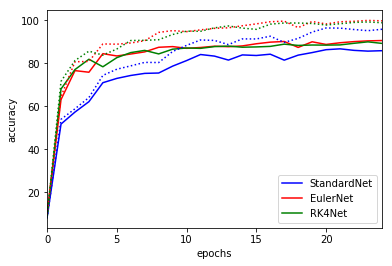}
    \includegraphics[width=6.2cm]{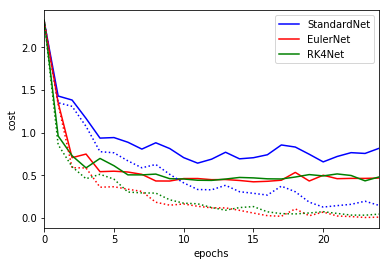}
    \caption[]{Accuracy (left) and cost (right) over the course of epochs on MNIST with network width $\hat{d}=30^2$, depth $L=5$ for \texttt{StandardNet} and $L=100$ for \texttt{EulerNet} and \texttt{RK4Net}, and $\tanh$ activation. Solid lines represent metrics on validation and dotted lines on training data.}
    \label{fig:MNIST_convergence}
\end{figure}

\begin{figure}
    \centering
    \includegraphics[width=6.2cm]{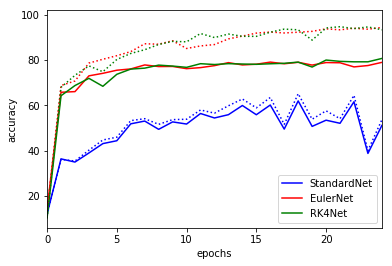}
    \includegraphics[width=6.2cm]{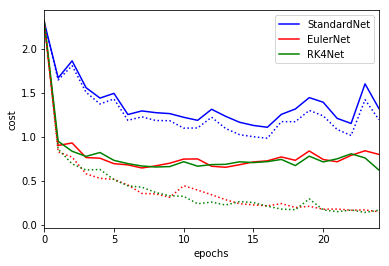}
    \caption[]{Accuracy (left) and cost (right) over the course of epochs on Fashion-MNIST with network width $\hat{d}=30^2$, depth $L=5$ for \texttt{StandardNet} and $L=100$ for \texttt{EulerNet} and \texttt{RK4Net}, and $\tanh$ activation. Solid lines represent metrics on validation and dotted lines on training data.}
    \label{fig:FashionMNIST_convergence}
\end{figure}

In conclusion, the numerical results of point and image classification have many key aspects in common. This suggests that Runge-Kutta networks are a promising architecture to also solve more complex and application-relevant problems.


\section{Outlook}

\label{sec:outlook}

Since the code is written in a modular way and employs the highly flexible tools of PyTorch, its functionality can be easily extended allowing for further experiments. Some examples for continuing the numerical considerations of neural networks based on NODEs or ANODEs include:
(i) testing network models derived by discretizing the ODE by further RK schemes or other integration methods, as in the code supplementing~\cite{chen2018NODE},
(ii) investigating the effect of learning adaptive time step sizes when solving the ODE, implemented as ODENet and ODENet+simplex in \cite{benning2019DLasOCP}, and
(iii) classifying other data types such as videos or audios with networks based on continuous dynamical systems in an augmented space.

\if{
\appendix 

\section{On topological properties of NODEs}

\begin{definition}
Let $  A$ and $ B$ be two sets of points in $\R^n$. Then $ A$ and $B$ are linearly separable if there exist $w \in \R^n$ and $k\in \R$  such that 
\be
w^{\top}x>k\quad \text{for } x \in A,\quad \text{and} \quad w^{\top}x<k \quad \text{for } x \in B,
\ee
\end{definition}

\begin{lemma} Given the sets 
$Q_1=\{(x,y)\in (-1,0)\times (0,1) \}$, $Q_2=\{(x,y)\in (0,1)\times (0,1) \}$, $Q_3=\{(x,y)\in (0,1)\times (0,-1) \}$, and  $Q_4=\{(x,y)\in (-1,0)\times (0,-1) \}$, cf. Figure \ref{ }. 
There does not exist a homeomorphism   $\varphi\colon \cup_{i=1,\dots,4} Q_i \rightarrow \R$ with
\begin{align}\label{hom}
\varphi (x) =\left\{
\begin{array}{ll}
1,& x \in Q_1 \cup Q_3,\\
-1,& x \in Q_2 \cup Q_4.
\end{array}
\right.
\end{align}
\end{lemma}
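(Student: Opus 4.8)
The plan is to obtain a contradiction from a topological invariant that every homeomorphism must preserve. The natural invariant here is the number of connected components, and the cleanest route is to show that the domain $U := \bigcup_{i=1}^{4} Q_i$ has four components while $\R$ has only one, so that no homeomorphism $\varphi\colon U \to \R$ can exist at all; the prescribed values of $\varphi$ will then be seen to supply a second, independent obstruction.

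First I would determine the components of $U$. Each $Q_i$ is an open rectangle, hence convex and therefore connected. Since every $Q_i$ uses \emph{open} intervals, the coordinate axes $\{x=0\}$ and $\{y=0\}$ meet none of them, that is $U \cap \{x=0\} = U \cap \{y=0\} = \emptyset$. Cutting along these two axes writes $U$ as the disjoint union of the four relatively open pieces $U\cap\{x<0,\,y>0\}=Q_1$, $U\cap\{x>0,\,y>0\}=Q_2$, and so on; each such piece is open in $U$, hence each $Q_i$ is clopen in $U$. A nonempty connected subset that is clopen is precisely a connected component, so $U$ has exactly the four components $Q_1,\dots,Q_4$.

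Next I would invoke the standard fact that a homeomorphism carries connected components to connected components bijectively, so that the number of components is a topological invariant. As $\R$ is connected it has a single component, whereas $U$ has four, and $4\neq 1$; hence no homeomorphism $\varphi\colon U\to\R$ exists, a fortiori none obeying the prescribed values. For completeness I would record that the value condition on its own already excludes a homeomorphism onto $\R$: such a map is injective and surjective, yet forcing $\varphi\equiv 1$ on $Q_1\cup Q_3$ and $\varphi\equiv -1$ on $Q_2\cup Q_4$ makes the image the two-point set $\{-1,1\}\subsetneq\R$, contradicting surjectivity and, since each $Q_i$ is infinite, injectivity.

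The main obstacle is conceptual rather than computational: one must notice that it is exactly the exclusion of the axes that makes the four squares genuinely distinct components. Had the \emph{closed} squares been used, $U$ would be the connected set $[-1,1]^2$ and the counting argument would collapse; the proof would then have to proceed through the two-valuedness of the prescribed map together with connectedness (a continuous $\{-1,1\}$-valued function on a connected set is constant, yet this one is not). I expect the only step needing care is this separation, which the explicit separating axes settle at once. A shorter route that ignores the value condition entirely is the dimension obstruction: restricting $\varphi$ to a single $Q_i$ would yield a homeomorphism from a space homeomorphic to $\R^2$ onto an open subset of $\R$, which is impossible since deleting one interior point disconnects the latter but not the former.
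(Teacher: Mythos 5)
Your argument is formally valid for the lemma exactly as printed: the four open rectangles are separated by the coordinate axes, so the domain has four connected components while $\R$ has one, and in any case a map that is constant on $Q_1\cup Q_3$ and on $Q_2\cup Q_4$ is neither injective nor surjective onto $\R$. But all of your obstructions (component counting, two-valuedness, the dimension argument) exploit what are evidently misprints in the formulation --- the codomain $\R$, the deleted axes, and the requirement that $\varphi$ itself take the values $\pm 1$ rather than that its image be linearly separable. The paper's own proof shows that the intended assertion is the one from Dupont et al.: there is no homeomorphism $\varphi$ of the plane whose images $\varphi(Q_1\cup Q_3)$ and $\varphi(Q_2\cup Q_4)$ can be separated by an affine functional $x\mapsto w^\top x-k$. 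The paper proves this by a curve-crossing argument: a continuous path joining a point of $Q_1$ to a point of $Q_3$ inside $\bar Q_1\cup\bar Q_3$ and a path joining $Q_2$ to $Q_4$ inside $\bar Q_2\cup\bar Q_4$ are both forced through the origin; continuity of $w^\top\varphi(\cdot)$ along each path, together with the strict inequalities holding away from the crossing parameter, assigns the common value $w^\top\varphi(0,0)$ to both sides of the separating hyperplane, a contradiction in the spirit of the intermediate value theorem.

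For that corrected statement every one of your obstructions evaporates: the domain is connected (indeed all of $\R^2$, or the closed union of the quadrants), the codomain is $\R^2$, and the map is no longer two-valued, so neither component counting nor surjectivity, injectivity, or invariance of dimension applies. The essential idea you would need --- that opposite quadrants can only be joined through the origin, so the two joining paths must meet there and the continuity of the separating functional yields contradictory signs at the meeting point --- is absent from your proposal. I would therefore not accept the proposal as a proof of the lemma the authors intend, even though it does correctly dispose of the lemma as literally stated; at minimum you should flag that the statement is degenerate as written and supply the curve-crossing argument for the nondegenerate version.
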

\begin{proof}
We follow ideas from \cite[Proof of Prop. 2]{Dupont}.
Assume such a $\varphi$ exists.
Let be given points $r_i \in Q_i$, $i=1,\dots, 4$. 
Consider continuous curves $c_1 \subset \bar{Q}_2 \cup \bar{Q}_4$ and $c_2 \subset \bar{Q}_1 \cup \bar{Q}_3$ parameterized with $t\in [0,1]$ with $c_1(0) =r_1$, 
$c_1(1) =r_3$, $c_2(0) =r_2$, $c_2(1) =r_4$.
We denote the parameters for which the curves $c_1$ and $c_2$ intersect in $(0,0)$  by $t_1$ and $t_2$.
Then 
\begin{align}
w^{\top} \varphi(c_1(t))<k\quad \text{for all } t\in [0,1],t\neq t_1,\\ 
w^{\top} \varphi(c_2(t))>k\quad \text{for all } t\in [0,1],t\neq t_2.
\end{align}
By continuity we obtain
\be
1=\varphi(c_1(t_1))=\varphi(c_2(t_2))=-1
\ee
which is a contradiction.
\end{proof}

One can show that the flow given by ordinary differential equations with differentiableXXX nonlinearity is a homeomorphism, see Younes  \cite{XXX}.
Hence, NODES cannot linearly separate the square illustrated in Figure \ref{ }.
}\fi

\providecommand{\bysame}{\leavevmode\hbox to3em{\hrulefill}\thinspace}
\providecommand{\MR}{\relax\ifhmode\unskip\space\fi MR }
\providecommand{\MRhref}[2]{%
  \href{http://www.ams.org/mathscinet-getitem?mr=#1}{#2}
}
\providecommand{\href}[2]{#2}

\end{document}